\DeclareMathOperator{\diag}{diag}	
\newcommand{\lp}{\left(}
\newcommand{\rp}{\right)}
\newcommand{\R}{\mathbb{R}}
\newcommand{\ve}{\mbox{vec}}
\newcommand{\hv}{\mbox{hvec}}
\newcommand{\nct}{\frac{n(n+1)}{2}}
\newcommand{\RNum}[1]{\uppercase\expandafter{\romannumeral #1\relax}}
\begin{document}
\title{Sparse Universum Quadratic Surface Support Vector Machine Models for Binary Classification 
}

\titlerunning{ Sparse Universum Quadratic Surface Support Vector Machine 

}        

\author{	Hossein Moosaei \and
	Ahmad Mousavi \and\\
		Milan Hlad\'{i}k \and
 Zheming Gao$^*$ \and
}
	


\institute{$*$ Corresponding Author \at
	\and
	Hossein Moosaei \at
	Department of Mathematics, Faculty of Science, University of Bojnord, Bojnord, Iran\\
	Department of Applied Mathematics, School of Computer Science, Faculty of Mathematics and Physics, Charles University, Prague, Czech Republic\\
	\email{hmoosaei@gmail.com, moosaei@ub.ac.ir, hmoosaei@kam.mff.cuni.cz }
		\and
		Ahmad Mousavi\at
	Institute for Mathematics and its Applications, University of Minnesota, Minneapolis, MN 55455, USA \\
	\email{amousavi@umn.edu }
	\and 
		 Milan Hlad\'{i}k  \at
	Department of Applied Mathematics, Faculty  of  Mathematics  and  Physics, Charles University, Prague, Czech Republic \\
	\email{hladik@kam.mff.cuni.cz}     
	\and
	Zheming Gao \at
	College of Information Science and Engineering, Northeastern University, Shenyang, Liaoning 110819, China\\
	\email{tonygaobasketball@hotmail.com}
}
\maketitle

\begin{abstract}
In binary classification, kernel-free linear or quadratic support vector machines are proposed to avoid dealing with difficulties such as finding appropriate kernel functions or tuning their hyper-parameters. Furthermore, Universum data points, which do not belong to any class, can be exploited to embed prior knowledge into the corresponding models so that the generalization performance is improved. In this paper, we design novel kernel-free Universum quadratic surface support vector machine models. Further, we propose the L1 norm regularized version that is beneficial for detecting potential sparsity patterns in the Hessian of the quadratic surface and reducing to the standard linear models if the data points are (almost) linearly separable. The proposed models are convex such that standard numerical solvers can be utilized for solving them. Nonetheless, we formulate a least squares version of the L1 norm regularized model and next, design an effective tailored algorithm that only requires solving one linear system. Several theoretical properties of these models are then reported/proved as well. We finally conduct numerical experiments on both artificial and public benchmark data sets to demonstrate the feasibility and effectiveness of the proposed models. 
\end{abstract}
\keywords{binary classification, quadratic surface support vector machines, $\ell_1$ norm regularization, least squares, Universum data.}

\section{Introduction} \label{sec: introduction}
Machine learning has been able to dramatically thrive in many aspects of our lives such as heart disease diagnostic, text categorization, computational biology, bioinformatics, image classification, lung cancer, colon tumor diagnostic, agriculture, prediction of cryptocurrency returns, electric load forecasting, etc. \cite{akyildirim2021prediction,arabasadi2017computer,bazikar2020dc,cai2004application,javadi2019learning,ketabchi2019improvement,luo2018benchmarking,mohammadi2020statistical,mohammadi2021finite,mohammadi2021ultrasound,mucherino2009survey,noble2004support,rezaee2020discrete,rezaee2020event,wang2011color}. Therefore, it naturally has absorbed much attention from a wide range of research communities. Nevertheless, there is a tremendous number of mathematical problems raised by machine learning that still need to be studied. As an inevitable task in machine learning, the binary classification plays a key role.

For the first time, Vapnik et al. presented the support vector machine (SVM) model for the binary classification problem \cite{vapnik1974theory}. By solving a convex quadratic programming problem, the SVM model finds two unique parallel supporting hyperplanes that obtain the maximum margin. Later, Weston et al. introduced a new algorithm, called the Universum support vector machine ($\mathfrak U$-SVM), that leverages Universum data by minimizing the number of observed contradictions, which is an alternative concept to maximizing the minimum margin approach \cite{vapnik1998statistical,weston2006inference}. The $\mathfrak U$-SVM allows one to encode prior knowledge by incorporating Universum samples that do not belong to any of the classes. This model has experimentally demonstrated that it delivers a better accuracy compared to the methods that only make use of labeled data. Further, many Universum models have been recently proposed that improve their parental models by increasing the classification accuracy \cite{qi2012twin,xiao2020new,sinz2007analysis}. 

However, most real-world applications are not linearly separable, which necessitates developing robust techniques that can deal with nonlinear situations properly. The most well-established technique to manage nonlinearly separable data sets hopes in the existence of a nonlinear mapping that takes the original data to a higher-dimensional (possibly even infinite-dimensional) feature space where the transferred data is linearly separable. Perhaps the main factor of applicability of the kernel technique comes from the fact that this mapping is not required to be known at all, and, basically, this method reduces to play with appropriate kernel functions that own helpful structures. Nonetheless, it is not generally clear how to choose an appropriate kernel function, and further, tuning the involved hyper-parameters consumes plenty of computational efforts. Consequently, it is natural to think of practical methods that seek for nonlinear classifiers in the original space.

The last decade has observed many rigorous kernel-free models that can directly handle nonlinearly separable data sets without mapping them to a larger feature space. The quadratic surface support vector machine (QSSVM) \cite{dagher2008quadratic,luo2016soft} utilizes a quadratic surface directly for separating the two classes of data. Bai et al.~\cite{Bai2015} proposed a kernel-free least squares QSSVM for disease diagnosis. Gao et al.~\cite{gao2019quadratic} proposed a least squares twin QSSVM by capturing the data with two quadratic surfaces. In addition, a kernel-free double-well potential support vector machine proposed in \cite{gao2021kernel} aims a special fourth-order polynomial separating surface. 
Recall that a well-received tool for handling high-dimensional data is sparsity \cite{mousavi2019survey,mousavi2019solution,shen2018least,shen2019exact}. Once we seek for nonlinear classifiers, the number of involved decision variables in the corresponding optimization program increases dramatically, which can lead to computational complexities and overfitting. 
Therefore, incorporating a surrogate that promotes sparsity of decision variables seems beneficial. There could be other advantages as well. For example, a main drawback for the QSSVM is that it does not produce a hyperplane even if the data set is linearly separable; a natural expectation that one requires the QSSVM to acquire. To resolve this shortcoming, the authors in \cite{mousavi2019quadratic} proposed the L1-QSSVM that incorporates an extra $\ell_1$ norm regularizer into the objective function and proved that if the penalty parameter of the regularizer is large enough (with a finite lower bound), then this property is obtained. In addition, under similar conditions, this new model can capture the sparsity pattern of the Hessian matrix of the true quadratic surface.

In this paper, we propose a novel $\ell_1$ norm regularized kernel-free  quadratic surface support vector machine (\ref{formulation: L1-U-SQSSVM}) that accommodates not only noises and outliers  but also Universum data. Next, we derive its least squares version of (\ref{lsqReform}). This new reformulation adopts the $\ell_2$ norm instead of the $\ell_1$ norm to penalize the slack variables and also replaces the inequality constraints  with equality constraints. This smart shaving leads to a fast algorithm that enjoys a satisfactory generalization performance. To verify the effectiveness of the proposed methods, we report experiments carried out on several data sets, including artificial and public benchmark data sets. These experiments also demonstrate the effectiveness and efficiency of the proposed models compared with other well-known SVM models.

The rest of this paper is organized as follows. in Section \ref{sec: related_works}, we review some well-established kernel-free models that are the cornerstones of the proposed models in this paper. Section \ref{sec: Universum_quadratic_models} proposes the new models of the paper and  discusses the motivations behind them. A fast and tailored algorithm is designed in Section \ref{sec: least-square-algorithm}. We bring several theoretical results to support our models in Section \ref{sec: theoretical_properties}. Numerical experiments are reported in Section \ref{Sec: Numerical_Experiments}. Finally, conclusions are made in Section \ref{sec: Conclusions}.

\paragraph{Notation.}
The $ n$-dimensional real vector space is denoted by $\mathbb R^n$. For $x\in \mathbb R^n,$  $|x|$ is defined element-wise. $A^T $ and $ \|\cdot\| $ are notations for  the transpose of a matrix $ A $ and the Euclidean norm, respectively.
 Let $ f $ be a real valued function on  $ \mathbb  R^n $; its gradient at a point $ x $ is represented by the $n$-dimensional column vector $\nabla f(x)$. 
Let $S_n$ denote all the $n\times n$ symmetric matrices. For $A\in S_n$ we use $A\succ 0$ to say that $A$ is positive definite. Given two matrices $A \in \R^{m\times n}$ and $ B \in \R^{p \times q}$, their Kronecker product is defined as
$$A \otimes B \, := \, \left[
\begin{array}{ccc}
a_{11}  B & \cdots & a_{1n}  B \\
\vdots & \vdots & \vdots \\
a_{m1}  B & \cdots & a_{mn} B
\end{array}\right] \, \in \, \R^{mp \times nq}.$$

\section{Problem Statement and Related Works} \label{sec: related_works}

We start by describing the fundamental kernel-free model proposed for binary classification when dealing with a(n) (almost) linearly separable data set. 

\subsection{Support Vector Machine}
Suppose that we are faced with a binary classification problem and the related data set is described as follows:

\begin{equation}\label{eq: data_set_T}
T=\left\{ (x_1,y_1),\dots,(x_m,y_m)\right\}\in \left(\mathbb R^n \times\{\pm 1 \}\right)^m,
\end{equation}
 where  the $x_i$ are $n$-dimensional samples and the $y_i$  are their corresponding labels. For such a two-class problem when the classes are (almost) linearly separable, the SVM algorithm finds  two parallel hyperplanes with the maximum minimum margin.
The training  samples from the two classes   are divided by  the middle 
hyperplane  i.e., $f(x)=w^{T}x+b=0$, if and only if  $y_i(w^{T}x_i+b)\geq 1,\  i = 1,\ldots,m. $
 The hyperplane for the separable case can be obtained by solving the following minimization problem:
\begin{eqnarray} \nonumber\label{eqi01}
\mathop {\min }\limits_{w,b \,\,\,} \,&& \frac{1}{2}{{\left\| w \right\|}^{2}}\\\nonumber
\mbox{s.t.} && y_i\left(x^T_iw+b\right)\geq 1,\quad  i = 1,\ldots,m,\nonumber 
\\ && w\in \R^n, \, b\in \R, \nonumber
\end{eqnarray}
and for the nonseparable case, the SVM model can be described by the following  optimization problem:
\begin{eqnarray}\tag{SVM}
\mathop {\min }\limits_{ w,b,\xi} \,&& \frac{1}{2}{{\left\| w \right\|}^{2}}+\mu\sum_{i=1}^m\xi_i \nonumber\\
\mbox{s.t.} && 
y_i\left(x^T_iw+b\right)\geq 1-\xi_i, \quad i=1,\dots,m, \nonumber\\
&&  w\in \R^n, \, b\in \R, \ \xi \in \R^m_+, \nonumber
\end{eqnarray}
where $\mu> 0$  is a penalty parameter to control the trade-off and  $\xi_i$'s  are slack variables to accommodate noisy data points and outliers.

\subsection{Universum Support Vector Machine}
\label{subsection: Universum Suppport Vector Machine}

Suppose that we are faced with a binary classification problem but the training set $ \tilde{T} $  consists of two subsets, that is, 
$$ \tilde{T}= T\cup \mathfrak{U}, \qquad \mbox{where} \qquad \mathfrak{U} =\{ u_1,\dots,u_r \},$$
where $T$ is defined in (\ref{eq: data_set_T}) and  $ \mathfrak{U} \in \mathbb R^{ r \times n} $ denotes the Universum class and each element of $\mathfrak{U} $  represents a Universum sample. 
 To facilitate  reading the paper and understanding the models,  we use index $i$ solely for data points in a given class and we use index $j$  solely for Universum data. Further, for presenting the related mathematical formulations, we associate the Universum data with both classes to have 
\begin{align*}
(u_1,1),\dots,(u_r,1),(u_1,-1),\dots,(u_r,-1).
\end{align*}
More specifically, for unifying the Universum constraints with the remaining constraints, we need the following notation:
\begin{align*} 
&&u_j&=u_j;\quad y_j=1,  & j=1, \dots, r,&&\\
&&u_{r+j}&=u_j;\quad y_{r+j}=-1, &j=1, \dots, r.
\end{align*}
The Universum support vector machine ($\mathfrak{U}$-SVM) was proposed and formulated by Weston et al.~\cite{weston2006inference}  as follows.

\begin{eqnarray*}\tag{$\mathfrak U$-SVM}\label{SVMUniversum}
\mathop {\min }\limits_{w, b,\xi,\psi} && \frac{1}{2}{{\left\| w \right\|}^{2}}+\mu\sum\limits_{i=1}^{m}{ \xi _{i} }+C_u \sum\limits_{j=1}^{2r}{ \psi_{j} } \nonumber \\
\mbox{s.t.} && y_i \left( {{x}_{i}^Tw}+b \right) \geq 1 - {{\xi }_{i}}, \quad i=1,\dots,m, \\ \nonumber
&& y_j \left( u_j^Tw+b \right) \geq -\varepsilon - {{\psi_{j}}}, \quad j=1,\dots,2r, \\ \nonumber
&& 
w\in \R^n, \, b\in \R, \, \xi\in \R^m_+, \psi\in \R^{2r}_+,
\end{eqnarray*}
where positive parameters $\mu$ and $C_u$ control the trade-off between the minimization of training errors and the maximization of the number of Universum samples, respectively.
Further, $\varepsilon>0$ is a parameter for the $\varepsilon$-insensitive tube.   For the case of $C_u=0,$  this formulation is reduced to a standard SVM classifier \cite{cherkassky2007learning}. In general, the parameter $\varepsilon$ can be set as a small positive value.

\subsection{Quadratic Surface Support Vector Machine}
Here, the goal is to find a quadratic surface $f(x)= \frac{1}{2} x^TWx+x^Tb+c=0$ for classifying two data sets that are (almost) quadratically separable. By approximating the margin of a quadratic surface, one can formulate the quadratic surface support vector machine model as follows \cite{dagher2008quadratic}:
\begin{eqnarray}\tag{QSSVM}  \nonumber
\mathop {\min }\limits_{W, b, c} \,&& \sum_{i=1}^m \|  W x_i +    b\|_2^2  \nonumber\\
\mbox{s.t.} && 
y_i\lp  \frac{1}{2} x_i^T   W x_i + x_i^T    b + c\rp  \geqslant 1, \quad i=1,\dots,m, \nonumber\\
&&    W\in  S_n, \,    b \in \R^n, \, c \in \R  \nonumber.
\end{eqnarray}
To compromise with  possible noise and outliers in the data,  the following  soft margin version of QSSVM penalizes mis-classifications \cite{luo2016soft}:
\begin{eqnarray}\tag{SQSSVM}\label{formulation: SQSSVM}
\mathop {\min }\limits_{ W,b, c}\,&& \sum_{i=1}^m \|  W x_i +    b\|_2^2 + \mu \sum_{i=1}^m     \xi_i \nonumber\\
\mbox{s.t.} &&     y_i \lp \frac{1}{2} x_i^T   W x_i + x_i^T    b + c \rp \geqslant 1-     \xi_i, \quad i=1,\dots,m, \nonumber\\
&&    W\in  S_n,     b \in \R^n,  c \in \R,    \xi \in \R^m_+ \nonumber.
\end{eqnarray}
 If the data is linearly separable, the above quadratic models do not necessarily reduce to the original SVMs, that is,  their optimal $W^*$ is not the zero matrix. Hence, the following model is introduced \cite{mousavi2019quadratic}:
\begin{eqnarray}\tag{L1-SQSSVM}\label{formulation: L1-SQSSVM} 
\mathop {\min }\limits_{ W,b, c,\xi} \,&& \sum_{i=1}^m \|  W x_i +    b\|_2^2 +\lambda \sum_{i\le j}\left|W_{ij}\right|+\mu \sum_{i=1}^m     \xi_i \nonumber\\
\mbox{s.t.} && 
y_i \lp \frac{1}{2} x_i^T   W x_i + x_i^T    b + c \rp \geqslant 1-     \xi_i, \quad i=1,\dots,m, \nonumber\\
&&   W\in  S_n, \,    b \in \R^n, \, c \in \R, \,    \xi \in \R^m_+. \nonumber
\end{eqnarray}

\section{Universum Quadratic Surface Support Vector Machines } \label{sec: Universum_quadratic_models}

To facilitate understanding the proposed models in this main section of the paper,  we first refer to Fig.~\ref{fig:Universum}, which depicts the geometry of the Universum models clearly. 
 \begin{figure*}[htbp]
	\centering
  	\includegraphics[width=0.38\textwidth]{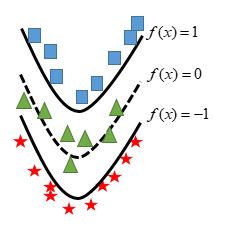}
  	\caption{ Universum QSSVM (Universum data are represented by green triangles).}
  	\label{fig:Universum}
  		\end{figure*}
\newline
We denote the hinge loss function by ${H_{ - \varepsilon }}\left[ t \right]$ given as:
  	\begin{align*}
  	{H_{ - \varepsilon }}\left[ t \right] = \max \left\{ {\begin{array}{*{20}{c}}
  		{0,}&{ - \varepsilon  - t}
  		\end{array}} \right\} = \left\{ {\begin{array}{*{20}{c}}
  		0& \quad {t >  - \varepsilon, }\\
  		{ - \varepsilon  - t}& \qquad \mbox{otherwise.} 
  		\end{array}} \right.
  	\end{align*}
  The $\varepsilon-$insensitive loss function can be defined as:
  	\begin{align*}
  	{\rho}[t] = {H_{ - \varepsilon}}\left[ t \right] + {H_{ - \varepsilon }}\left[ { - t} \right].
  	\end{align*}
  	Fig.~\ref{fig:Loss}$(a)$,  Fig.~\ref{fig:Loss}$(b)$, and Fig.~\ref{fig:Loss}$(c)$, graphically show the hinge, $\varepsilon-$insensitive, and quadratic loss functions,  respectively.
  		\begin{figure*}[!ht]
  		\centering
  	\includegraphics[width=0.8\textwidth]{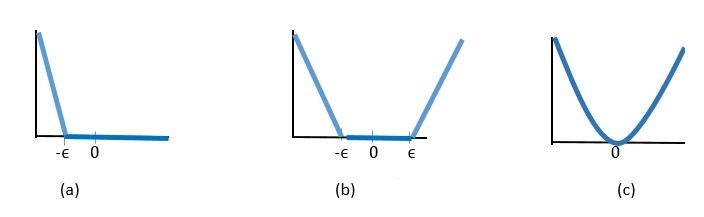}
  	\caption{{\hm ~$(a)$ hinge loss function; $ (b)$ $ \varepsilon-$insensitive loss function; $ (c)$ quadratic loss function.}}
  	\label{fig:Loss}       
  		\end{figure*}
Thus, since we assume that  Universum data falls in an $\varepsilon$-tube neighborhood  of the classifier, we can embed the prior knowledge in  Universum data by using the parametric  $ \varepsilon-$insensitive loss function.	The geometric interpretation of this loss function for Universum data is described in Fig \ref{fig:newloss1}.
  	\begin{figure*}[!ht]
  	\centering
  	\includegraphics[width=0.7\textwidth]{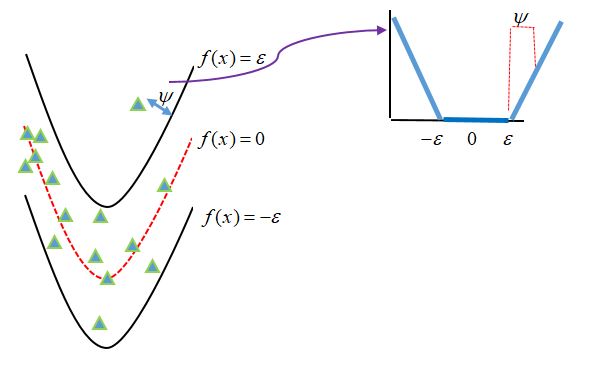}
  	\caption{  $ \varepsilon-$insensitive loss function for Universum data set.}
  	\label{fig:newloss1}       
  		\end{figure*}
The prior knowledge embedded in Universum data can be reflected in the summation of parametric  $ \varepsilon-$insensitive loss functions, i.e., $ \sum_{j = 1}^{2r} {{\rho}[f(u_j)]}$. It is clear that the smaller value of this summation leads to higher prior possibility of correct classification, and vice versa. Thus, by adding ${C_u}\sum_{j = 1}^{2r} {{\rho}[f(u_j)]}$ in the objective function of QSSVM, we introduce our new model, the Universum-quadratic surface  support vector machine, as follows.

\begin{eqnarray}\tag{$\mathfrak U$-QSSVM}\label{formulation: U-QSSVM}
\mathop {\min }\limits_{  W,    b, c} \,&& \sum_{i=1}^m \|  W x_i +    b\|_2^2 \nonumber\\
\mbox{s.t.} && y_i \lp \frac{1}{2} x_i^T   W x_i + x_i^T    b + c \rp \geqslant 1, \quad  i=1,\dots,m, \nonumber\\
&&  y_j \lp \frac{1}{2} u_j^T   W u_j + u_j^T    b + c \rp \geqslant -\varepsilon, \quad  j=1,\dots,2r, \nonumber\\
           &&   W\in  S_n,\   b \in \R^n,\ c \in \R. \nonumber
\end{eqnarray}

To account for possible noise and outliers in the data,  the  soft margin loss function is used. We suppose the classifier has the form
$ f_{W,b,c}(x)=\frac{1}{2} x^T   W x + x^T    b + c$, so the objective function of $\mathfrak U$-QSSVM can formulated as:
	\begin{align}\label{eq0l1}
  \sum_{i=1}^m	\|  W x_i +    b\|_2^2 + \mu \sum_{i=1}^m     {{H_1}[f_{W,b,c}(y_ix_i)]}+{C_u}\sum\limits_{j = 1}^{2r} {{\rho}[f_{W,b,c}(u_j)]} ,
  	\end{align}
where $\mu$, $ {C_u} $ controls the loss of samples and impact of  Universum data, respectively. Hence, the soft margin version that also include possible noisy Universum data  can be formulated as follows:
\begin{eqnarray}\tag{$\mathfrak U$-SQSSVM}\label{formulation: U-SQSSVM}
\mathop {\min }\limits_{W,b, c,\xi, \psi} \,&& \sum_{i=1}^m \|  W x_i +    b\|_2^2 + \mu \sum_{i=1}^m     \xi_i + C_u \sum_{j=1}^{2r}     \psi_j  \nonumber\\
\mbox{s.t.} && y_i \lp \frac{1}{2} x_i^T   W x_i + x_i^T    b + c \rp \geqslant 1-     \xi_i, \quad i=1,\dots,m,  \nonumber\\
&& y_j\lp \frac{1}{2} u_j^T   W u_j + u_j^T    b + c \rp \geqslant -\varepsilon-\psi_{j},\quad  j=1,\dots,2r,  \nonumber\\
&&    W\in  S_n, \,    b \in \R^n, \, c \in \R, \,    \xi \in \R^m_+,\,    \psi \in \R^{2r}_+. \nonumber
\end{eqnarray}
In \cite{mousavi2019quadratic}, authors demonstrated that introducing  an $\ell_1$ norm regularization term (penalizing the Hessian matrix components of the classifier) in the objective function benefits the  quadratic surface SVM model in terms of reducing to the original SVM if the data points are linearly separable and further, capturing sparsity of the true Hessian matrix when the corresponding  penalty parameter ($\lambda$ below) is large enough. Thus, we also propose the $\ell_1$ norm regularized version of the above model below:
\begin{eqnarray}\label{formulation: L1-U-SQSSVM} \nonumber \\
\mathop {\min }\limits_{  W,    b, c,    \xi,\psi} \,&& \sum_{i=1}^m \|  W x_i +    b\|_2^2 + \lambda \sum_{i\le j}\left|W_{ij}\right| + \mu \sum_{i=1}^m     \xi_i + C_u \sum_{j=1}^{2r}     \psi_j  \nonumber\\
\mbox{s.t.} &&  y_i \lp \frac{1}{2} x_i^T   W x_i + x_i^T    b + c \rp \geqslant 1-     \xi_i, \quad i=1,\dots,m,  \nonumber\\
 && y_j\lp \frac{1}{2} u_j^T   W u_j + u_j^T    b + c \rp \geqslant -\varepsilon-\psi_{j},\quad  j=1,\dots,2r,  \nonumber\\
&&   W\in S_n, \,    b \in \R^n, \, c \in \R, \,    \xi \in \R^m_+, \,   \psi \in \R^{2r}_+. \tag{L1-$\mathfrak U$-SQSSVM}
\end{eqnarray}
The quadratic surface based models mentioned or introduced above are not in the standard forms of quadratic programs. So, We now introduce some notations and give several definitions that will help us achieve this goal. We shall be succinct here as these definitions are mainly borrowed from \cite{mousavi2019quadratic}. 
For a square matrix $  A = [a_{ij}]_{i=1,\dots,n;j=1,\dots,n}\in \mathbb R^{n\times n}$, its vectorization given by
$$\ve(A) := \left[a_{11},\dots, a_{n1},a_{12},\dots,a_{n2},\dots,a_{1n},\dots,a_{nn}\right]^T \in \R^{n^2}.$$
In case $A$ is symmetric, $\ve(A)$ contains redundant information so that we often consider its half-vectorization  given by:
$$\hv(  A) := \left[a_{11},\dots, a_{n1},a_{22},\dots,a_{n2},\dots,a_{nn}\right]^T \in \R^{\nct}.$$
Given $n\in \mathbb N$,  there exist a unique elimination matrix $  L_n \in \R^{\nct \times n^2}$ such that \cite{magnus1980elimination}
$$  L_n \ve(  A) = \hv(  A);\quad \forall \,   A \in S_n,$$ 
and further, this elimination matrix $  L_n$ has full row rank \cite{magnus1980elimination}. Conversely, for any $n\in \mathbb N$, there is a unique duplication matrix $  D_n \in \R^{n^2 \times \nct}$ such that 
$$  D_n \hv(  A) = \ve(  A); \quad \forall\,   A \in  S_n \qquad \mbox{and} \qquad L_nD_n=I_\nct.$$ 

\begin{definition} \label{def: definitios}
Let 
\begin{eqnarray}\nonumber
 s_i := &&  \frac{1}{2}\hv( x_i x_i^T), \qquad \forall i = 1, \dots, m \\ \nonumber
s_j := &&  \frac{1}{2}\hv(u_ju_j^T), \qquad \forall  j = 1, \dots,2r \\ \nonumber
r_i := && [s_i;x_i], \qquad \forall  i = 1, \dots,m \\ \nonumber
r_j:= && [s_j;u_j], \qquad \forall  j = 1, \dots, 2r \\ \nonumber
w := && \hv(W), \\ \nonumber
z: = && [w;b],\\ \nonumber
V:= && \begin{bmatrix} I_{\frac{n(n+1)}{2}} &   0_{\frac{n(n+1)}{2}\times n} \end{bmatrix}, \qquad \forall  i = 1,\dots,m \\ \nonumber
X_i : =&& I_n \otimes x_i^T, \qquad \forall  i = 1, \dots,m \\ \nonumber
M_i := && X_i D_n, \qquad \forall  i = 1, \dots, m \\ \nonumber
H_i:= && 
\begin{bmatrix} M_i &   I_{ n}\end{bmatrix},\\ \nonumber
 G := && 2 \sum_{i=1}^{m} H_i^T H_i, \\ \nonumber
X:=&& [x_1^T;x_2^T,\dots;x_m^T;u_1^T;u^T_2;\dots;u_r^T].
\end{eqnarray}
\end{definition}
Consequently, for fixed $i = 1, \dots, m $ and $j = 1, \dots, 2r $, we get the following equations:
\begin{equation*}
     \left\{ \begin{array}{ll}
\frac{1}{2 }x_i^TW x_i + x_i^Tb + c=  z^T r_i+ c, & \\
\frac{1}{2} u_j^TW u_j + u_j^Tb + c=  z^T r_j+ c, & \\
Wx_i  =  X_i\ve( W) \, = X_i   D_n \hv(  W) \, = \, M_i \hv(  W) \, = \, M_iw, & \\
W x_i +    b  =  \, M_iw +  I_nb \, = \, H_iz, & \\
\sum_{i=1}^m \|Wx_i+b\|_2^2 =\sum_{i=1}^m \left(H_i    z\right)^T\left(H_i z\right) \,= \,    z^T \left[\sum_{i=1}^{m} \left(H_i\right)^T H_i\right]    z=\frac{1}{2}z^TGz.
\end{array} \right.
\end{equation*}
The above definitions and equations lead to the following standard quadratic program:
\begin{eqnarray}\tag{$\mathfrak U$-SQSSVM'}\label{formulation: U-SQSSVM'} \nonumber
\min_{z, c, \xi,\psi} \,&&  \frac{1}{2}z^TGz+ \mu \sum_{i=1}^m     \xi_i + C_u \sum_{j=1}^{2r}     \psi_j \nonumber\\
\mbox{s.t.} && y_i \lp z^Tr_i +  c \rp \geqslant 1-     \xi_i, \quad i=1,\dots,m, \nonumber\\
&&  y_j\lp z^Tr_j+c \rp \geqslant -\varepsilon-\psi_{j},\quad  j=1,\dots,2r, \nonumber\\
&&    z \in \R^{\frac{n(n+1)}{2}+n}, \, c \in \R, \,    \xi \in \R^m_+,\,     \psi \in \R^{2r}_+, \nonumber
\end{eqnarray}
and the following quadratic model with an $\ell_1$ norm regularizer:
\begin{eqnarray}\tag{L1-$\mathfrak U$-SQSSVM'}\label{formulation: L1-U-SQSSVM'}  \nonumber
\min_{z, c, \xi,\psi} \,&&  \frac{1}{2}z^TGz+ \lambda \|Vz\|_1+ \mu \sum_{i=1}^m     \xi_i + C_u \sum_{j=1}^{2r} \psi_j  \nonumber\\
\mbox{s.t.} && y_i \lp z^Tr_i +  c \rp \geqslant 1-     \xi_i, \quad i=1,\dots,m,  \nonumber\\
&& y_j\lp z^Tr_j+c \rp \geqslant -\varepsilon-\psi_{j},\quad  j=1,\dots,2r,  \nonumber\\
&&   z \in \R^{\frac{n(n+1)}{2}+n},  \, c \in \R, \,    \xi \in \R^m_+,\, \psi \in \R^{2r}_+.  \nonumber
\end{eqnarray}
These models are all convex such that efficient optimization solvers can be utilized for solving them. Nonetheless, we show that the least squares version of the mainly proposed model can have a fast tailored algorithm.

 \section{Least Squares Universum Quadratic Surface Support Vector Machine} \label{sec: least-square-algorithm}

In this section, we propose the least squares version of (\ref{formulation: L1-U-SQSSVM}) and introduce a fast and efficient method for solving it.
The least square model is called  LS-L1-$\mathfrak U$-SQSSVM, which seeks a  quadratic classifier like as well, but here to measure the empirical risk we use quadratic loss function instead of the other loss functions so the objective function of (\ref{eq0l1}) will be changed to:
	\begin{align}\label{eq001}
  \sum_{i=1}^m    	\|  W x_i +    b\|_2^2 + \mu \sum_{i=1}^m     {	\| 1-f_{W,b,c}(y_ix_i)\|^2}+{C_u}\sum\limits_{j = 1}^{2r} {	\| f_{W,b,c}(u_j)\|^2}.
  	\end{align}
By introducing the slacks variables $\xi$ and $\psi$, and after a reformulation, we  have the following problem:

\begin{eqnarray}\label{lsqReform} \nonumber \\
\mathop {\min }\limits_{  W,    b, c,    \xi,\psi} \,&& \sum_{i=1}^m \|  W x_i +    b\|_2^2 + \lambda \sum_{i\le j}\left|W_{ij}\right| + \mu \|\xi\|_2^2 + C_u \|\psi\|_2^2  \nonumber\\
\mbox{s.t.} &&  y_i \lp \frac{1}{2} x_i^T   W x_i + x_i^T    b + c \rp = 1-     \xi_i, \quad i=1,\dots,m,  \nonumber\\
 && y_j\lp \frac{1}{2} u_j^T   W u_j + u_j^T    b + c \rp = -\varepsilon-\psi_{j},\quad  j=1,\dots,2r,  \nonumber\\
&&   W\in S_n, \,    b \in \R^n, \, c \in \R, \,    \xi \in \R^m,    \psi \in \R^{2r}. \tag{LS-L1-$\mathfrak U$-SQSSVM}
\end{eqnarray}
In this formulation, the minimization of $\left\|\psi \right \|^{2}_2$ makes the Universum data as close as possible to hyperplane $\frac{1}{2} x_i^T   W x_i + x_i^T    b + c= \varepsilon$, the $\varepsilon$ is a very small parameter that can be even zero. The constraints of the above problem are equations, so by substituting $ \xi$ and $\psi$ into the objective function, and by virtue of Definition \ref{def: definitios},  (\ref{lsqReform}) can be reformulated as
\begin{align}\label{lsqReform2}
   \min_{z,c}\
  \dfrac{1}{2}z^{T}Gz+\lambda \|Vz \|_{1}+\mu \big \| e -D_{1}\big(A^{T}z+ce\big)\big\|^{2}_2+C_u \big \| \varepsilon e + D_{2}\big(U^{T}z+ce\big)\big \|^{2}_2, \tag{LS-L1-$\mathfrak U$-SQSSVM'}
\end{align}
where $A=[r_1, r_2,\dots,r_m]$, $U=[u_1,\dots ,u_{2r}]$ by putting $u_j=r_j$, $D_{1}=\diag (y_{i}), D_{2}=\diag (y_{j})$,  $\mu$ and $C_u$ are the penalty parameters, and $ \xi$ and $\psi$  are slack variables.

Note that we have $V = [v_1, v_2, \dots, v_{n(n+1)/2}]^T,$ 
where $v_i = [e_i^T, 0_{1\times n}]^T$ by Definition~\ref{def: definitios} and consequently, we get \begin{align}\label{4} 
(\left \| Vz\right \|_{1})'=\sum_{i=1}^{m}\dfrac{v_{i}^T z v_{i} }{|v_{i}^T z|}=\sum \dfrac{v_{i}^{T}zv_{i}}{D_{i}},
\qquad\mbox{if } \quad |v_{i}^T z|\neq 0,
\end{align}
and, by letting $D=\diag\left(\left|v_{1}^T z\right|,\left|v_{2}^T z\right|,\dots, \left|v_{\nct}^T z\right|\right)$, 
\begin{align}\label{5} 
\Big( \sum \dfrac{(v_{i}^T z)^{2}}{2D_{i}}\Big)'=\dfrac{2}{2}\sum \dfrac{v_{i}^{T}zv_{i} }{D_{i}}.
\end{align}
Since \eqref{4}=\eqref{5}, the derivative is similar and we use 
\begin{align*}
\sum \dfrac{(v_{i}^T z)^{2}}{2D_{i}}=\dfrac{1}{2} (Vz)^T D(Vz), \quad\mbox{where} \quad  D=\diag\left(\frac{1}{|Vz|}\right),
\end{align*}
instead of $\left \| Vz\right \|_{1}$. Then, (\ref{lsqReform2}) is equivalent to:
\begin{equation*}
\min_{z,c}\ \dfrac{1}{2}z^{T}Gz+\dfrac{\lambda}{2}(Vz)^{T}D(Vz)+\mu \big\| e-D_{1}\big(A^{T}z+ce\big)\big\|^{2}+C_u\big\| \varepsilon e +D_{2}\big(U^{T}z+ce\big)\big\|^{2}.
\end{equation*}

 This problem can be solved by putting the gradient with respect to $z$ and $c$ is equal to zero, so we have the following equations:
\begin{align*}
\dfrac{\partial f}{\partial z}&=Gz+\lambda V^{T}DVz-2\mu AD_{1}\big(e-D_{1}\big(A^{T}z+ce\big)\big)\\
&~~+2C_u UD_{2}\big(\varepsilon e+D_{2}
\big(U^{T}z+ce\big)\big)=0,\\
\dfrac{\partial f}{\partial c}&=-2\mu e^{T}D_{1}\big(e-D_{1}\big(A^{T}z+ce\big)\big)
+2C_u e^{T}D_{2}\big(\varepsilon e+D_{2}\big(U^{T}z+ce\big)\big)=0,
\end{align*}
 by integrating the above equations, we have the fallowing system:
\begin{align}
&\left[ \begin{matrix}
G + \lambda V^{T}DV + 2\mu AD_{1}D_{1}A^{T} +2C_u  UD_{2}D_{2}U^{T} & 2\mu AD_{1}D_{1}e + 2C_u  UD_{2}D_{2}e \\
2\mu e^{T}D_{1}D_{1}A^{T} + 2C_u  e^{T}D_{2}D_{2}U^{T} & 2\mu e^{T}D_{1}D_{1}e + 2C_u  e^{T}D_{2}D_{2}e
\end{matrix}\right] \left[ \begin{matrix}
z\\
c
\end{matrix}\right] \nonumber\\\label{eq: gradient = 0}
& = \left[ \begin{matrix}
2\mu AD_{1}e-2C_u UD_{2}\varepsilon e\\
2\mu e^{T}D_{1}e-2C_u e^{T}D_{2}\varepsilon e
\end{matrix}\right].
\end{align}
Since $D_{1}D_{1}$ and $D_{2}D_{2}$ are both identity matrices, we can solve out $z$ and $c$ as the following:

Let 
$$\Sigma = \left[ \begin{matrix}
G + \lambda V^{T}DV + 2\mu AA^{T} +2C_u  UU^{T} & 2\mu Ae + 2C_u  Ue \\
2\mu e^{T}A^{T} + 2C_u  e^{T}U^{T} & 2\mu m + 4C_u  r
\end{matrix}\right],$$
and assume it is invertible. Let 
$$\beta =\left[ \begin{matrix}
2\mu AD_{1}e-2C_u UD_{2}\varepsilon e\\
2\mu e^{T}D_{1}e-2C_u e^{T}D_{2}\varepsilon e
\end{matrix}\right],$$
then 
\begin{align*}
\left[ \begin{matrix}
z\\
c
\end{matrix}\right]= \Sigma^{-1}\beta.
\end{align*}
 The Algorithm  1 describes our proposed method.
\begin{center}
\begin{tabular}{l}
\hline Algorithm 1\\ 
\hline Input: matrices $G, A, U, D_1, D_2$, parameters $C, C_u , \varepsilon$. \\ 
Output: $z, c$\\
Initialize $z^0$, and $D^0\diag \Big(\dfrac{1}{|Vz^0|}\Big)\neq 0$. Calculate $\Sigma^0$ based on $D^0$;\\
\quad While ~~ $\left \| w^{k+1}-w^{k}\right \| >10^{-6}$\\
\quad $w^{(k+1)}=\left(\begin{matrix}
 z^{k+1}\\
c^{k+1}
\end{matrix} \right) =\Big(\Sigma^k\Big)^{-1} \beta$\\
\quad Update $D^{k+1}=\diag\left(\dfrac{1}{|Vz^{k+1}|}\right)\neq 0$, and $\Sigma^{k+1}$\\
\quad $k=k+1$\\
end. \\
\hline 
\end{tabular} 
\end{center} 
 \begin{remark}
In our computations, in the case $D^{k+1}=\diag\left(\frac{1}{\left|Vz^{k+1}\right|}\right)=0$, we approximated it by $D^{k+1}=\diag\left(\frac{1}{\left|Vz^{k+1}\right|+ \delta}\right)$, where $\delta$ is a very small positive number.
\end{remark}

In the next section, we study several theoretical properties of the models proposed above.

\section{Theoretical Properties of Proposed Models} \label{sec: theoretical_properties}

We first bring several results which their proofs have similar ideas as those cited.

\begin{theorem}[solution existence] \label{theorem:existence}
   Given any data set, the models (\ref{formulation: U-SQSSVM}) and (\ref{formulation: L1-U-SQSSVM}) obtain their optimal solutions with finite objective values.
\end{theorem}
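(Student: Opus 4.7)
The plan is to reduce both problems to convex quadratic programs over polyhedra and then invoke the Frank--Wolfe theorem, which guarantees that a convex quadratic objective bounded below on a non-empty polyhedron attains its infimum. First I would pass to the equivalent standard forms (\ref{formulation: U-SQSSVM'}) and (\ref{formulation: L1-U-SQSSVM'}); since the substitution $z=[\hv(W);b]$ is a linear bijection between $S_n\times\R^n$ and $\R^{n(n+1)/2+n}$ and preserves both feasible set and objective value (via the identities listed after Definition~\ref{def: definitios}), existence for the original models follows from existence for these reformulations.

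Next I would verify the feasible sets are non-empty by exhibiting an explicit feasible point: take $z=0$, $c=0$, $\xi_i=1$ for $i=1,\ldots,m$, and $\psi_j=0$ for $j=1,\ldots,2r$. Then $y_i(z^Tr_i+c)=0\geq 1-\xi_i$ and $y_j(z^Tr_j+c)=0\geq -\varepsilon-\psi_j$, while the non-negativity constraints on the slack variables hold trivially.

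I would then show each objective is bounded below by $0$: the matrix $G=2\sum_{i=1}^m H_i^TH_i$ is positive semidefinite, so $\tfrac{1}{2}z^TGz\geq 0$; the slack penalties $\mu\sum_i\xi_i$ and $C_u\sum_j\psi_j$ are non-negative because $\mu,C_u>0$ and $\xi,\psi\geq 0$ on the feasible set; and the regularizer $\lambda\|Vz\|_1$ in the L1 version is obviously non-negative. Combining these, the infimum is finite.

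Finally, for (\ref{formulation: U-SQSSVM'}) the objective is a convex quadratic function on a polyhedron, so the Frank--Wolfe theorem directly delivers a minimizer attaining a finite value. For (\ref{formulation: L1-U-SQSSVM'}) I would linearize the $\ell_1$ term by introducing auxiliary variables $t\in\R^{n(n+1)/2}$ with constraints $-t\leq Vz\leq t$, replacing $\lambda\|Vz\|_1$ by $\lambda\mathbf{1}^Tt$; this produces an equivalent convex QP over an enlarged polyhedron, so Frank--Wolfe applies once more and any lifted minimizer projects back to a minimizer of the original problem. The only mild obstacle will be documenting the lifting step carefully; I would deliberately avoid the tempting coercivity or level-set compactness route, since $G$ can be singular (the data matrix need not be full rank), and it is precisely the polyhedrality of the feasible set that makes the Frank--Wolfe argument robust to this degeneracy.
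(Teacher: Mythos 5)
Your proposal is correct and is essentially the argument the paper relies on: the paper's own ``proof'' simply defers to \cite[Theorem 4.1]{mousavi2019quadratic}, where existence is established in the same way --- pass to the QP reformulation, linearize the $\ell_1$ term with auxiliary variables, observe that the polyhedral feasible set is nonempty and the convex quadratic objective is bounded below by $0$, and invoke the Frank--Wolfe theorem. The only adaptation your write-up adds (and handles correctly) is feasibility of the Universum constraints, where the point $z=0$, $c=0$, $\xi=\mathbf{1}$, $\psi=0$ is admissible precisely because $\varepsilon>0$.
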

\begin{proof}
See \cite[Theorem 4.1]{mousavi2019quadratic}.
\end{proof}
By Theorem 4.3 in \cite{mousavi2019quadratic}, the positive semidefinite  matrix $G$ introduced in Definition~\ref{def: definitios} is positive definite if and only if the sample data matrix $X$ (also from Definition~\ref{def: definitios}) has linearly independent columns and the vector of ones $\textbf{e}$ is not in its range.  Roughly speaking, Theorem 4.4 in \cite{mousavi2019quadratic}  demonstrates that the matrix $G$ is positive definite for almost  any given data set. 

\begin{theorem}[$ z$-uniqueness]  \label{theorem:z_uniqueness}
If $G\succ 0$, 
then the solution $ z^*$ is unique in the models (\ref{formulation: U-SQSSVM'}) and (\ref{formulation: L1-U-SQSSVM'}).
\end{theorem}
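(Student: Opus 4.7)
The plan is to argue by strict convexity of the quadratic form $\tfrac{1}{2}z^{T}Gz$ combined with convexity of all the remaining pieces of the objective and the affineness of the constraints. The same argument will handle both (\ref{formulation: U-SQSSVM'}) and (\ref{formulation: L1-U-SQSSVM'}) since the extra $\lambda\|Vz\|_1$ term is convex and only reinforces the conclusion.

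First I would invoke Theorem~\ref{theorem:existence} (applied to the equivalent reformulations) to ensure that an optimal solution exists, so the claim of uniqueness is not vacuous. Then I would suppose, for contradiction, that two distinct optimal points $(z_1^*,c_1^*,\xi_1^*,\psi_1^*)$ and $(z_2^*,c_2^*,\xi_2^*,\psi_2^*)$ exist with $z_1^*\neq z_2^*$, and consider the midpoint
\[
(\bar z,\bar c,\bar\xi,\bar\psi)=\tfrac{1}{2}\bigl(z_1^*+z_2^*,\,c_1^*+c_2^*,\,\xi_1^*+\xi_2^*,\,\psi_1^*+\psi_2^*\bigr).
\]
Since every constraint is affine in $(z,c,\xi,\psi)$ and the nonnegativity cones $\R^m_+$ and $\R^{2r}_+$ are convex, the midpoint is feasible.

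Next I would bound the objective at the midpoint. The hypothesis $G\succ 0$ makes $z\mapsto \tfrac{1}{2}z^{T}Gz$ strictly convex, so
\[
\tfrac{1}{2}\bar z^{T}G\bar z < \tfrac{1}{2}\bigl(\tfrac{1}{2}(z_1^*)^{T}Gz_1^*+\tfrac{1}{2}(z_2^*)^{T}Gz_2^*\bigr),
\]
because $z_1^*\neq z_2^*$. The remaining terms of the objective, namely $\mu\sum_i\xi_i$, $C_u\sum_j\psi_j$, and (in the L1 case) $\lambda\|Vz\|_1$, are each convex, so their value at the midpoint is no larger than the average of their values at the two optima. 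Adding the strict inequality to these weak inequalities yields an objective value at $(\bar z,\bar c,\bar\xi,\bar\psi)$ strictly smaller than the common optimal value, which is impossible. Hence $z_1^*=z_2^*$, proving uniqueness of the $z$-component.

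The step I expect to require a small amount of care is the bookkeeping in the L1 case: one has to note that $\|V\bar z\|_1\leq \tfrac{1}{2}\|Vz_1^*\|_1+\tfrac{1}{2}\|Vz_2^*\|_1$ by the triangle inequality, so that term does not undo the strict gain in the quadratic part. Beyond that, the proof is a standard ``strictly convex part kills non-uniqueness in its variable'' argument, and no further structural properties of $G$, $A$, or $U$ are needed.
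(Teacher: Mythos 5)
Your argument is correct: with $G\succ 0$ the quadratic part is strictly convex in $z$, the remaining objective terms are convex, and the constraints are affine, so averaging two optimal solutions with distinct $z$-components would produce a feasible point with strictly smaller objective value. This is essentially the same strict-convexity argument that the paper relies on, since its proof simply defers to the analogous uniqueness theorem (Theorem 4.2) in the cited L1-QSSVM reference, which proceeds in the same way.
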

\begin{proof}
See \cite[Theorem 4.2]{mousavi2019quadratic}.
\end{proof}

\begin{theorem}[vanishing margin $\xi$]  \label{theorem:xi_0_mu_large}
Assume the data set $T$ {\color{purple} defined in Section \ref{subsection: Universum Suppport Vector Machine}} is quadratically separable and $G\succ 0$. For any $\lambda$, there exists a corresponding $\underline{\mu}$ (depending on $\lambda$), such that for all $\mu>\underline{\mu}$, (\ref{formulation: L1-U-SQSSVM}) has the unique solution  $( z^*, c^*,  \xi^*=0,\psi^*)$.
\end{theorem}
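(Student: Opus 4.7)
The plan is to reduce the claim to a KKT argument on the restricted program obtained from (\ref{formulation: L1-U-SQSSVM'}) by freezing $\xi = 0$ upfront. By quadratic separability of $T$, there exists $(\widetilde W, \widetilde b, \widetilde c)$ with $y_i(\tfrac{1}{2} x_i^T \widetilde W x_i + x_i^T \widetilde b + \widetilde c) \geq 1$ for every training point, and rescaling by $t > 1$ turns this into a strict inequality, producing a Slater point for the restricted program
\begin{align*}
\min_{z,c,\psi}\ & \tfrac{1}{2} z^T G z + \lambda\|Vz\|_1 + C_u \sum_{j=1}^{2r} \psi_j \\
\text{s.t.\ } & y_i(z^T r_i + c) \geq 1, \quad i = 1,\dots,m, \\
& y_j(z^T r_j + c) \geq -\varepsilon - \psi_j, \quad j = 1,\dots,2r, \\
& z \in \R^{\frac{n(n+1)}{2}+n},\ c \in \R,\ \psi \in \R^{2r}_+.
\end{align*}
Since $G \succ 0$ and the feasible set is nonempty, this restricted problem attains an optimum $(z_S, c_S, \psi_S)$, and Slater's condition supplies finite KKT multipliers $\alpha_i^S, \beta_j^S, \gamma_j^S \geq 0$, so that $\bar\alpha(\lambda) := \max_i \alpha_i^S$ is finite.

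Next, I would set $\underline{\mu} := \bar\alpha(\lambda)$ and, for any $\mu > \underline{\mu}$, verify that the tuple $(z_S, c_S, \xi = 0, \psi_S)$ together with multipliers $(\alpha_i = \alpha_i^S,\, \eta_i = \mu - \alpha_i^S,\, \beta_j = \beta_j^S,\, \gamma_j = \gamma_j^S)$ satisfies the full KKT system of (\ref{formulation: L1-U-SQSSVM'}). The stationarity conditions in $z, c, \psi$ are inherited verbatim from the restricted problem; the new stationarity in $\xi_i$ reads $\mu - \alpha_i - \eta_i = 0$ and holds by construction; dual feasibility $\eta_i = \mu - \alpha_i^S > 0$ follows from $\mu > \underline{\mu}$; and complementary slackness $\eta_i \xi_i = 0$ is trivial since $\xi = 0$. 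By convexity of (\ref{formulation: L1-U-SQSSVM'}), this KKT point is a global minimum, yielding $\xi^* = 0$.

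For uniqueness, Theorem \ref{theorem:z_uniqueness} gives a unique $z^*$ under $G \succ 0$; once $z^*$ and $c^*$ are fixed, $\psi_j^* = \max\{0, -\varepsilon - y_j((z^*)^T r_j + c^*)\}$ is determined; and $c^*$ is pinned by simultaneously active separating constraints from both classes at the optimum.

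The step I expect to be trickiest is extracting the bound $\bar\alpha(\lambda) < \infty$ from the restricted stationarity equation $G z_S + \lambda V^T s = \sum_i \alpha_i^S y_i r_i + \sum_j \beta_j^S y_j r_j$ with $s \in \partial\|V z_S\|_1$: the $\lambda$-dependent subgradient must be absorbed carefully before one can control $\max_i \alpha_i^S$ in terms of the problem data, and it is exactly this absorption that produces the $\lambda$-dependence of $\underline{\mu}$ asserted in the statement. A minor secondary issue is ensuring the rescaled separator $t(\widetilde W, \widetilde b, \widetilde c)$ actually gives strict feasibility; this is automatic because multiplying by $t > 1$ scales the margin $y_i(\cdot) \geq 1$ to $y_i(\cdot) \geq t > 1$.
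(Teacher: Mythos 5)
Your overall strategy is the natural one and, as far as it goes, it is sound: the paper itself gives no in-text proof (it defers to Theorem 4.5 of the cited L1-QSSVM paper), and that result is precisely an exact-penalty statement of the kind you reconstruct — take the hard-margin restriction with $\xi=0$, extract multipliers, and set $\underline{\mu}$ to the largest margin multiplier. Two of your worries are non-issues: since every constraint of the restricted problem is affine, KKT multipliers exist at any optimum without a Slater point (the rescaling by $t>1$ is harmless but unnecessary), and the step you flag as ``trickiest'' — bounding $\max_i\alpha_i^S$ explicitly after absorbing the subgradient $\lambda V^T s$ — is not needed at all: the theorem only asks for \emph{some} finite $\underline{\mu}$ depending on $\lambda$, so finiteness of the multipliers suffices. (You should, however, justify attainment of the restricted optimum, since the objective is independent of $c$; both classes being nonempty bounds $c$ along minimizing sequences, as in the solution-existence theorem.)

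The genuine gaps are at the end. First, your KKT verification shows only that $(z_S,c_S,0,\psi_S)$ \emph{is} a global solution of (\ref{formulation: L1-U-SQSSVM'}); the theorem asserts that every solution has $\xi^*=0$, so you must exclude optima with $\xi\neq 0$. This is where the strict inequality $\mu>\underline{\mu}$ does its work: writing $h(z,\psi)=\tfrac12 z^TGz+\lambda\|Vz\|_1+C_u\sum_j\psi_j$ and using weak duality for the restricted problem, any feasible $(z,c,\xi,\psi)$ satisfies $h(z,\psi)\ge f_S-\sum_i\alpha_i^S\xi_i\ge f_S-\underline{\mu}\sum_i\xi_i$, hence the full objective is at least $f_S+(\mu-\underline{\mu})\sum_i\xi_i$, which forces $\xi=0$ at any optimum; this short estimate is missing from your write-up. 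Second, your uniqueness claim for $c^*$ (``pinned by simultaneously active separating constraints from both classes'') is asserted, not proved: the objective does not involve $c$, the Universum penalty in $c$ is only piecewise linear, and nothing you wrote shows that active constraints of both signs must exist at the optimum. Since uniqueness of the whole tuple is part of the statement (and, once $z^*$ and $c^*$ are fixed, $\psi^*$ is indeed determined as you say), this step needs an actual argument or an explicit appeal to the cited theorem rather than a one-line assertion.
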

\begin{proof}
See \cite[Theorem 4.5]{mousavi2019quadratic}.
\end{proof}

\begin{theorem}[upper and lower bound for $c$]\label{theorem: boundc}
Suppose that  $G\succ 0$ and let $( z^*, c^*,  \xi^*,\psi^*)$ be the optimal solution of problem (\ref{formulation: L1-U-SQSSVM'}). Then, $\underline{c} \leqslant c^* \leqslant \overline{c}$, where
\begin{align*}
\underline{c}&=\max \{\underline{\alpha},\underline{\beta}\},\\
\overline{c}&=\min \{\overline{\alpha}, \overline{\beta} \},
\end{align*}
and
\begin{align*}
\underline{\alpha}&= 
 \max_{i:1\leq i\leq m,\,y_i=1} \{1-\xi_{i}^{*}-(z^*)^T r_{i}\},\\
\underline{\beta}&= 
 \max_{j:1\leq i\leq r,\,y_i=1}    
   \{-\varepsilon-\psi_j^*-(z^*)^T r_i\},\\
\overline{\alpha}&= 
 \max_{i:1\leq i\leq m,\,y_i=-1} \{\xi_i^*-1-(z^*)^T r_i\},\\
\overline{\beta}&= 
 \max_{j:r+1\leq j\leq 2r,\,y_i=-1}    
   \{\varepsilon + \psi_{j}^{*}-(z^*)^T r_{j}\}.
   \end{align*}
\end{theorem}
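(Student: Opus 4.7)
The plan is to exploit directly the feasibility of $(z^*, c^*, \xi^*, \psi^*)$ in (\ref{formulation: L1-U-SQSSVM'}) and to isolate $c^*$ in each scalar inequality constraint. Because every constraint is affine in $c$, a one-line rearrangement yields either a lower or an upper bound on $c^*$ according to the sign of the label $y_i$ (or $y_j$ for the Universum constraints); taking the appropriate extremum across indices of each sign then produces the four scalars $\underline{\alpha}, \underline{\beta}, \overline{\alpha}, \overline{\beta}$.

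For the lower bound, fix $i\in\{1,\dots,m\}$ with $y_i=1$: the constraint $y_i((z^*)^T r_i + c^*)\ge 1-\xi_i^*$ collapses to $c^*\ge 1-\xi_i^*-(z^*)^T r_i$, and maximizing the right-hand side over all such $i$ gives $c^*\ge \underline{\alpha}$. The same manipulation applied to the Universum constraints with $y_j=1$ (i.e.\ $j=1,\dots,r$) yields $c^*\ge -\varepsilon-\psi_j^*-(z^*)^T r_j$, hence $c^*\ge \underline{\beta}$. Combining, $c^*\ge \max\{\underline{\alpha},\underline{\beta}\}=\underline{c}$.

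The upper bound follows by the mirror argument. For every sample index $i$ with $y_i=-1$ the constraint reads $-(z^*)^T r_i - c^*\ge 1-\xi_i^*$, which rearranges to $c^*\le \xi_i^*-1-(z^*)^T r_i$; since this holds for every such $i$, in particular $c^*$ is bounded above by the maximum of these quantities, so $c^*\le \overline{\alpha}$. Analogously, for each Universum index $j\in\{r+1,\dots,2r\}$ (where $y_j=-1$) the constraint becomes $c^*\le \varepsilon+\psi_j^*-(z^*)^T r_j$, so $c^*\le \overline{\beta}$. Therefore $c^*\le \min\{\overline{\alpha},\overline{\beta}\}=\overline{c}$.

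I expect no genuine obstacle: the argument uses only feasibility, not optimality, and the hypothesis $G\succ 0$ enters only via Theorem~\ref{theorem:z_uniqueness} to make the symbol $z^*$ well defined. The only mild subtlety is a degenerate situation in which one of the four index sets is empty; the corresponding extremum should then be read as $\mp\infty$, which is consistent with no constraint of that type imposing a bound on $c^*$ in that direction. Under the natural assumption that $T$ contains points of both labels this boundary case does not arise, and the finite bounds stated in the theorem are recovered.
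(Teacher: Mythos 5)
Your proposal is correct and follows essentially the same route as the paper's proof: use feasibility of $(z^*,c^*,\xi^*,\psi^*)$ in the affine constraints, split by the sign of the label, isolate $c^*$, and pass to the stated extrema (including the observation that bounding above by the maximum over the $y=-1$ indices is a valid, if weak, consequence of being bounded by each term). Nothing further is needed.
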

\begin{proof}
The optimal solution  of  problem (\ref{formulation: L1-U-SQSSVM'}) satisfies the constraints
\begin{align}
&y_{i}\left((z^*)^T r_{i}+c^{*}\right)\geq 1-\xi_{i}^{*}, \quad \forall i=1,2,\dots,m \label{1} \\
&y_{j}\left((z^*)^T r_{j}+c^{*}\right)\geq -\varepsilon  -\psi_{j}^{*}, \quad \forall i=1,2,\dots,2r.\label{2} \end{align}
For $y_{i}=1$, the inequality \eqref{1} transfers to
\begin{align*}
(z^*)^T r_{i}+c^{*}\geq 1-\xi_{i}^{*}
\ \Rightarrow\  
c^{*}\geq 1-\xi_{i}^{*}-(z^*)^T r_{i}
\ \Rightarrow\  
c^{*}\geq \underline{\alpha}.
\end{align*}
For $y_{i}=-1$, from \eqref{1}, we have:
\begin{align*}
(z^*)^T r_{i}+c^{*}\leq \xi_{i}^{*}-1
\ \Rightarrow\  
c^{*}\leq \overline{\alpha}.
\end{align*}
Similarly, for $y_{j}=1~ (j=1,\dots, r)$ and $y_{j}=-1~ ( j=r+1,\cdots, 2r)$ in \eqref{2} we have:
\begin{align*}
(y_{j}=1)
\ \Rightarrow\  
(z^*)^T r_{j}+c^{*}\geq -\varepsilon -\psi_{j}^{*}
\ \Rightarrow\  
c^{*}\geq \underline{\beta},
\end{align*}
and for $y_{j}=-1$
\begin{align*}
(z^*)^T r_{j}+c^{*} \leq \varepsilon + \psi^{*}_{j}
\ \Rightarrow\  
c^{*} \leq \overline{\beta},
\end{align*}
which completes the proof.
\end{proof}

Naturally, the situation in which $z^*=0$ is not desirable. Thus, we focus now on conditions ensuring that the optimal solution is nonzero.

\begin{proposition}
Let $(z^*, c^*)$ and $f^*$ be an optimal solution and the optimal value of the problem ~\eqref{lsqReform2}, respectively. Denote
\begin{subequations}\label{lsqCondNonzeroZvarCXP}
\begin{align}
\tilde{c} &= \frac{\mu e^TD_1e-C_u\varepsilon e^TD_2e}{\mu m+2r C_u},\\
\tilde{\xi} &= e-D_1e\tilde{c},\\
\tilde{\psi} &= -\varepsilon e-D_2e\tilde{c}.
\end{align}
\end{subequations}
If 
\begin{align*}
f^* < \mu\|\tilde{\xi}\|^2_2+C_u \| \tilde{\psi} \|^2_2,
\end{align*}
then $z^*\not=0$.
\end{proposition}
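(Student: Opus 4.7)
The plan is a contrapositive: I would assume $z^* = 0$ and deduce the lower bound $f^* \ge \mu\|\tilde\xi\|_2^2 + C_u\|\tilde\psi\|_2^2$, contradicting the hypothesis. The key observation is that when $z$ is fixed at zero, the objective of \eqref{lsqReform2} collapses to a scalar quadratic in $c$ whose unique minimizer is precisely $\tilde c$.

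First, I would substitute $z=0$ into the objective of \eqref{lsqReform2}. The term $\tfrac12 z^T G z$ and the regularizer $\lambda\|Vz\|_1$ both vanish, leaving
\[
g(c) \;:=\; \mu\bigl\|e - D_1 c e\bigr\|_2^2 + C_u\bigl\|\varepsilon e + D_2 c e\bigr\|_2^2.
\]
Next, I would minimize $g$ over $c\in\R$. Since $g$ is a strictly convex quadratic in a single variable, its minimizer is characterized by $g'(c)=0$. Using that $D_1 = \diag(y_i)$ and $D_2 = \diag(y_j)$ have $\pm 1$ diagonal entries, so $D_1^2 = I_m$ and $D_2^2 = I_{2r}$, the stationarity condition reduces to
\[
(2\mu m + 4 r C_u)\, c \;=\; 2\mu\, e^T D_1 e \,-\, 2 C_u \varepsilon\, e^T D_2 e,
\]
which yields $c = \tilde c$ as defined in \eqref{lsqCondNonzeroZvarCXP}. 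Substituting $\tilde c$ back into $g$ and recognizing $\tilde \xi = e - D_1 e\tilde c$ and $\tilde\psi = -\varepsilon e - D_2 e \tilde c$ as the corresponding residuals, I obtain $g(\tilde c) = \mu\|\tilde\xi\|_2^2 + C_u\|\tilde\psi\|_2^2$.

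Finally, if $z^* = 0$ at optimality, then the optimality of $(0, c^*)$ in the $c$-variable forces
\[
f^* \;=\; g(c^*) \;\ge\; \min_{c\in\R} g(c) \;=\; \mu\|\tilde\xi\|_2^2 + C_u\|\tilde\psi\|_2^2,
\]
contradicting the strict inequality assumed. Hence $z^* \neq 0$. I do not foresee any real obstacle: the entire argument is a one-dimensional convex quadratic minimization, and the only nontrivial observation is the squaring identity $D_k^2 = I$ for the sign matrices, which is precisely what makes the closed form for $\tilde c$ so clean. The $\ell_1$ regularizer $\lambda\|Vz\|_1$ plays no role in the bound because it vanishes at $z=0$.
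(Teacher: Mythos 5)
Your argument is correct and follows essentially the same route as the paper: assume $z^*=0$, observe that the objective reduces to a one-dimensional convex quadratic in $c$ minimized at $\tilde c$ (the paper likewise expands $\mu\|\xi\|_2^2+C_u\|\psi\|_2^2$ as a quadratic in $c$ and identifies $c^*=\tilde c$), and conclude $f^*\ge\mu\|\tilde\xi\|_2^2+C_u\|\tilde\psi\|_2^2$, contradicting the hypothesis. Your explicit use of $D_1^2=I$, $D_2^2=I$ and the careful inequality $f^*=g(c^*)\ge\min_c g(c)$ is just a slightly more detailed rendering of the same proof.
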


\begin{proof}
Suppose that $z^*=0$. Then $\xi=e-D_1ec$ and $\psi=-\varepsilon e-D_2ec$. Thus $c^*$ minimizes the quadratic function
\begin{align*}
\mu\|\xi\|^2_2+C_u \| \psi \|^2_2
=c^2(\mu m+2r C_u)
 +2c(-\mu e^TD_1e+C_u\varepsilon e^TD_2e )
 +(m+2r\varepsilon^2).
\end{align*}
Its minimum has the value $c^*=\tilde{c}$, whence 
the optimal value is
\begin{align*}
f^* = \mu\|\tilde{\xi}\|^2_2+C_u \| \tilde{\psi} \|^2_2,
\end{align*}
which contradicts our assumption.
\end{proof}

Naturally, when $\mu$ and $C_u $ are small, then the corresponding terms in the problem \eqref{lsqReform2} force the optimal solution $z^*$ to be zero. Therefore, we can condition $z^*\not=0$ by taking $\mu$ and $C_u$ large enough. But which values are really large enough? For the sake of simplicity of exposition, we assume in the following that $C_u=\mu$.

\begin{proposition}
Let $\hat{z}\in \mathbb R^m$, let $(\tilde{c},\tilde{\xi},\tilde{\psi})$ from \eqref{lsqCondNonzeroZvarCXP} and denote 
\begin{align*}
\hat{c} &= \frac{ e^TD_1e-e^TA^T\hat{z}-\varepsilon e^TD_2e -e^TU^T\hat{z}}{m+2r},\\
\hat{\xi} &= e-D_1A^T\hat{z}-D_1e\hat{c},\\
\hat{\psi} &= -\varepsilon e-D_2U^T\hat{z}-D_2e\hat{c}.
\end{align*}
If 
\begin{align}\label{lsqCondNonzeroMu}
\mu > \frac{\frac{1}{2}\hat{z}^TG\hat{z}+\lambda \left \|V\hat{z} \right\|_{1}}{\|\tilde{\xi}\|^2_2+\| \tilde{\psi} \|^2_2-\|\hat{\xi}\|^2_2\|-\hat{\psi} \|^2_2}
\end{align}
and the denominator of the right-hand side is positive, then  $z^*\not=0$.
\end{proposition}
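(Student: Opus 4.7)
The plan is to invoke the immediately preceding proposition, which states that $z^*\neq 0$ whenever the optimal value $f^*$ of \eqref{lsqReform2} satisfies $f^* < \mu\|\tilde{\xi}\|_2^2 + C_u\|\tilde{\psi}\|_2^2$. Under the standing simplification $C_u=\mu$ introduced in the paragraph preceding the statement, this reduces to $f^* < \mu\bigl(\|\tilde{\xi}\|_2^2 + \|\tilde{\psi}\|_2^2\bigr)$. So the whole task is to exhibit a feasible point whose objective value lies strictly below this threshold, and then algebraically turn that bound into the stated condition on $\mu$.

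First I would observe that $(\hat{z},\hat{c})$ is automatically admissible, since the equality constraints of \eqref{lsqReform} merely serve to define $\xi$ and $\psi$; after substitution the objective of \eqref{lsqReform2} depends only on $(z,c)$. With $\hat{\xi}$ and $\hat{\psi}$ as defined in the statement, a direct substitution gives
\begin{equation*}
f(\hat{z},\hat{c})
= \tfrac{1}{2}\hat{z}^T G \hat{z} + \lambda\|V\hat{z}\|_1 + \mu\|\hat{\xi}\|_2^2 + \mu\|\hat{\psi}\|_2^2,
\end{equation*}
where I use that $\|\varepsilon e + D_2(U^T\hat{z}+\hat{c}e)\|_2^2 = \|\hat{\psi}\|_2^2$ since the two expressions differ only by a global sign. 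Consequently $f^* \le f(\hat{z},\hat{c})$.

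It then suffices that $f(\hat{z},\hat{c}) < \mu\bigl(\|\tilde{\xi}\|_2^2+\|\tilde{\psi}\|_2^2\bigr)$, which rearranges to
\begin{equation*}
\tfrac{1}{2}\hat{z}^T G \hat{z} + \lambda\|V\hat{z}\|_1
< \mu\bigl(\|\tilde{\xi}\|_2^2+\|\tilde{\psi}\|_2^2 - \|\hat{\xi}\|_2^2 - \|\hat{\psi}\|_2^2\bigr).
\end{equation*}
When the bracketed quantity is positive, dividing preserves the inequality and yields exactly the hypothesis \eqref{lsqCondNonzeroMu}; combining this with the preceding proposition gives $z^*\neq 0$.

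There is no real obstacle beyond bookkeeping. The two items to be careful about are, first, the sign conventions: verifying that $\hat{\psi}=-\varepsilon e - D_2U^T\hat{z} - D_2 e\hat{c}$ matches the term $\|\varepsilon e + D_2(U^T\hat{z}+\hat{c}e)\|_2^2$ appearing in the objective of \eqref{lsqReform2} up to a sign flip, and similarly for $\hat{\xi}$. Second, tracking strictness: the hypothesis on $\mu$ is strict, so the resulting upper bound on $f^*$ is strict, which is precisely what the previous proposition requires. The positivity assumption on the denominator is needed both to make the division legitimate and to keep the direction of the inequality intact; without it, the statement would degenerate.
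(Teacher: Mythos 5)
Your proof is correct and takes essentially the same route as the paper: bound the optimal value $f^*$ by the objective value of the feasible point $(\hat{z},\hat{c},\hat{\xi},\hat{\psi})$, compare it with the best value attainable at $z=0$, namely $\mu\bigl(\|\tilde{\xi}\|_2^2+\|\tilde{\psi}\|_2^2\bigr)$, and rearrange to get the stated threshold on $\mu$. The only cosmetic difference is that you obtain the $z=0$ baseline by citing the preceding proposition, whereas the paper recomputes it directly and additionally verifies that $\hat{c}$ minimizes the objective over $c$ for fixed $\hat{z}$ --- a step that motivates the choice of $\hat{c}$ but is not needed for the implication.
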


\begin{proof}
The objective value of the feasible solution $(z=0,\tilde{c},\tilde{\xi},\tilde{\psi})$ is
$$
\tilde{f}=\mu\|\tilde{\xi}\|^2_2+\mu\| \tilde{\psi} \|^2_2.
$$

Now, we show that for $\hat{z}$ the optimal values of $c$, $\xi$ and $\psi$ are $\hat{c}$, $\hat{\xi}$ and $\hat{\psi}$. From the constraints, we have $\hat{\xi}=e-D_1A^T\hat{z}-D_1e\hat{c}$ and $\hat{\psi}=-\varepsilon e-D_2U^T\hat{z}-D_2e\hat{c}$. Thus $\hat{c}$ minimizes the quadratic function (for certain $\alpha\in R$)
\begin{align*}
\|\hat{\xi}\|^2_2+ \| \hat{\psi} \|^2_2
=c^2(m+2r)
 +2c\left(- e^TD_1e+e^TA^T\hat{z}+\varepsilon e^TD_2e +e^TU^T\hat{z}\right)
 +\alpha.
\end{align*}
Its minimum is attained at the value of $c=\hat{c}$. 

The objective value of the feasible solution $(\hat{z},\hat{c},\hat{\xi},\hat{\psi})$ is
$$
\hat{f}=\frac{1}{2}\hat{z}^TG\hat{z}+\lambda \left \|V\hat{z} \right\|_{1}+\mu\|\hat{\xi}\|^2_2 + \mu\|\hat{\psi} \|^2_2.
$$
Hence we have $z^*\not=0$ provided $\tilde{f}>\hat{f}$, from which we express $\mu$ as in~\eqref{lsqCondNonzeroMu}.
\end{proof}

The question is how to find an appropriate $\hat{z}$ for which 
\begin{align*}
\|\tilde{\xi}\|^2_2+\| \tilde{\psi}\|^2_2
> \|\hat{\xi}\|^2_2\| +\|\hat{\psi} \|^2_2.
\end{align*}
If we have no candidate, we can find it in essence by solving an auxiliary optimization problem \begin{align*}
 \min_{z,c}\ 
 \big \| e -D_{1}\big(A^{T}z+ce\big)\big\|^{2}_2
 +\big \| \varepsilon e + D_{2}\big(U^{T}z+ce\big)\big \|^{2}_2.
\end{align*}
Basically, one need not to solve it to optimality. It is sufficient to obtain an objective value less that $\|\tilde{\xi}\|^2_2+\| \tilde{\psi}\|^2_2$.

\section{Numerical Experiments}
\label{Sec: Numerical_Experiments}

In this section, various numerical experiments are conducted to verify the proposed models for binary classification. We first introduce some settings of the experiments.

\subsection{Experiment Settings}
\label{subsection: effect of parameters and Universum data}

In the numerical experiments, the proposed $\mathfrak U$-SQSSVM, L1-$\mathfrak U$-SQSSVM, and LS-L1-$\mathfrak U$-SQSSVM models are implemented as well as some benchmark models, including the SQSSVM \cite{luo2016soft}, L1-SQSSVM \cite{mousavi2019quadratic}, linear soft-SVM (LSVM) \cite{cortes1995support}, and the Universum SVM ($\mathfrak U$-SVM) \cite{weston2006inference}. Besides, the radial basis function (RBF) kernel-based SVM (SVM-rbf) and the RBF kernel-based Universum SVM ($\mathfrak U$-SVM-rbf) are also implemented. All the computational experiments are conducted on a desktop equipped with four Intel (R) Core (TM) i3-9100 CPU @ 3.40GHz CPUs and 32GB RAM. Moreover, Mosek 9.2.20 is utilized as the optimization solver for several models. The machine learning software package Scikit-learn \cite{scikit-learn} is utilized to implement LSVM and SVM-rbf models. To avoid the dominance of input features with greater numerical values over other smaller values, the data points are normalized to $[0,1]$ for each tested data set. To assess the performance of the algorithms and their classification accuracy, the five-fold cross-validation strategy \cite{stone1974cross} is utilized in the experiments in Sections \ref{subsection: Experiments on artificial data sets} and \ref{subsection: experiments on public benchmark data sets}.

For each single experiment, the accuracy score is calculated  as the number of correct predictions divided by the total number of predictions; the result is then multiplied by 100 to obtain the percentage accuracy. For the artificial and the public benchmark data sets utilized in Section \ref{subsection: Experiments on artificial data sets} and \ref{subsection: experiments on public benchmark data sets}, we randomly select ten percent of data points from each class (i.e. 20\% in total), and use them to generate Universum data points by averaging each pair of samples from different classes.

\subsection{Effect of Parameters and Universum Data}
\label{subsection: effect of parameters and Universum data}

We first conduct some experiments to show how the parameters $C_u$ and $\epsilon$ affect the classification accuracy. The colormaps are plot for each proposed model when fixing parameters $\mu$ and $\lambda$:

\begin{itemize}
    \item Arti-Q2 data:  $\mu = 65536$, $\lambda = 4$.
    \item ND2by200 data:  $\mu = 65536$, $\lambda = 4$.
\end{itemize}

The $\mu$ and $\lambda$ are the optimal parameters for SQSSVM and L1-SQSSVM models when applied to the data sets.

\begin{figure}[H]
    \centering
    \begin{subfigure}[H]{0.3\textwidth}
        \includegraphics[width=\textwidth]{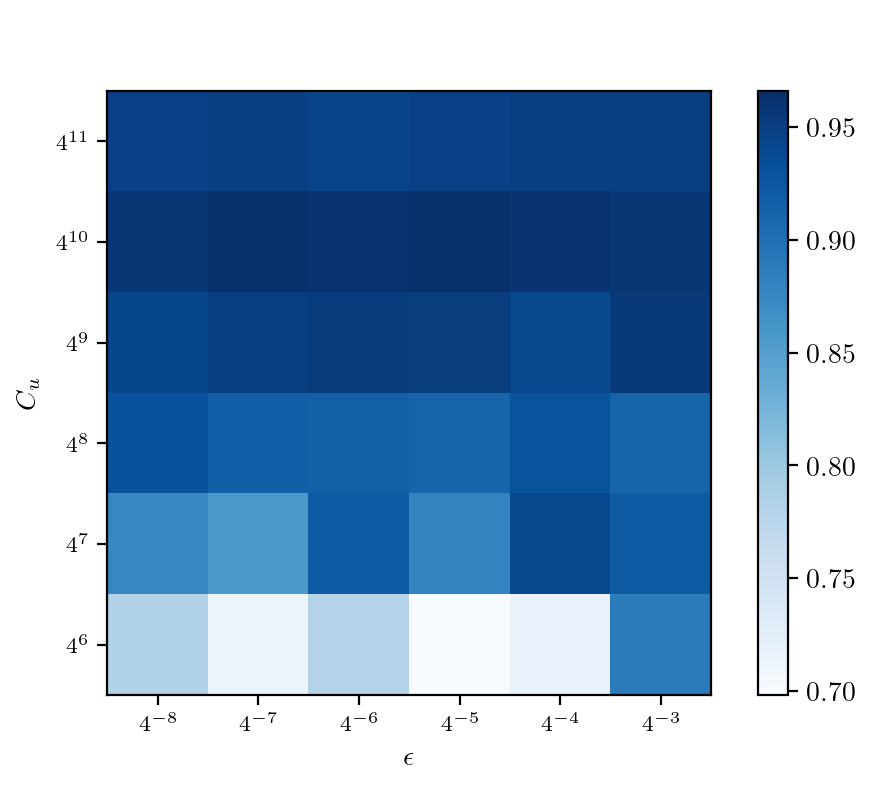}
        \caption{$\mathfrak U$-SQSSVM (Arti-Q2)}
        \label{fig: U-SQS Arti-Q2}
    \end{subfigure}
	~
    \begin{subfigure}[H]{0.3\textwidth}
        \includegraphics[width=\textwidth]{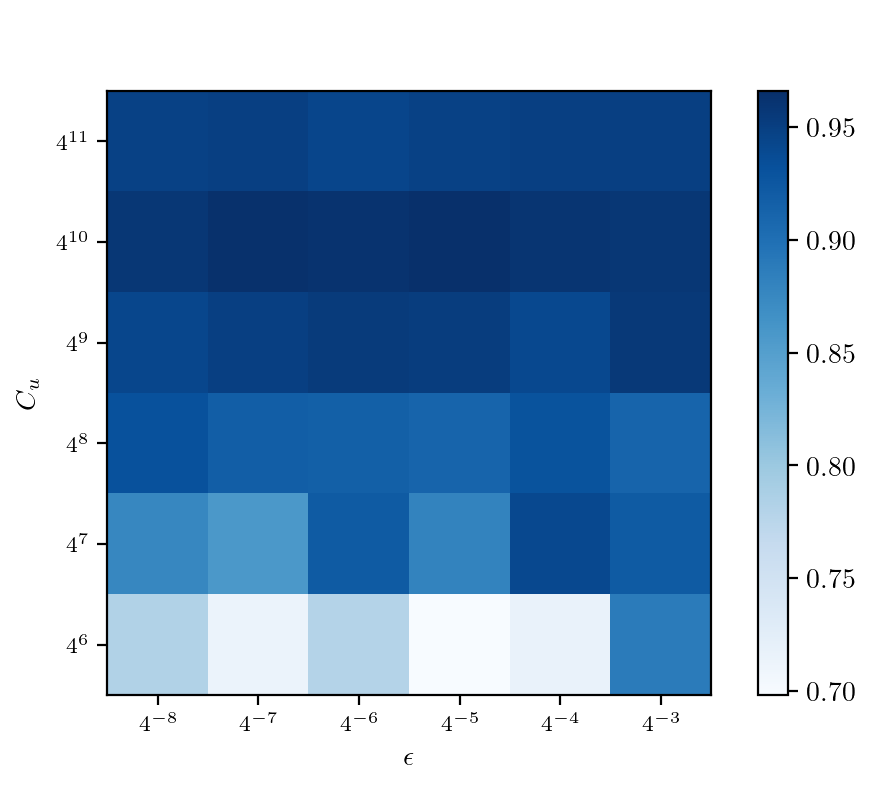}
        \caption{L1-$\mathfrak U$-SQSSVM (Arti-Q2)}
        \label{fig: L1-U-SQS Arti-Q2}
    \end{subfigure}
    ~
    \begin{subfigure}[H]{0.3\textwidth}
        \includegraphics[width=\textwidth]{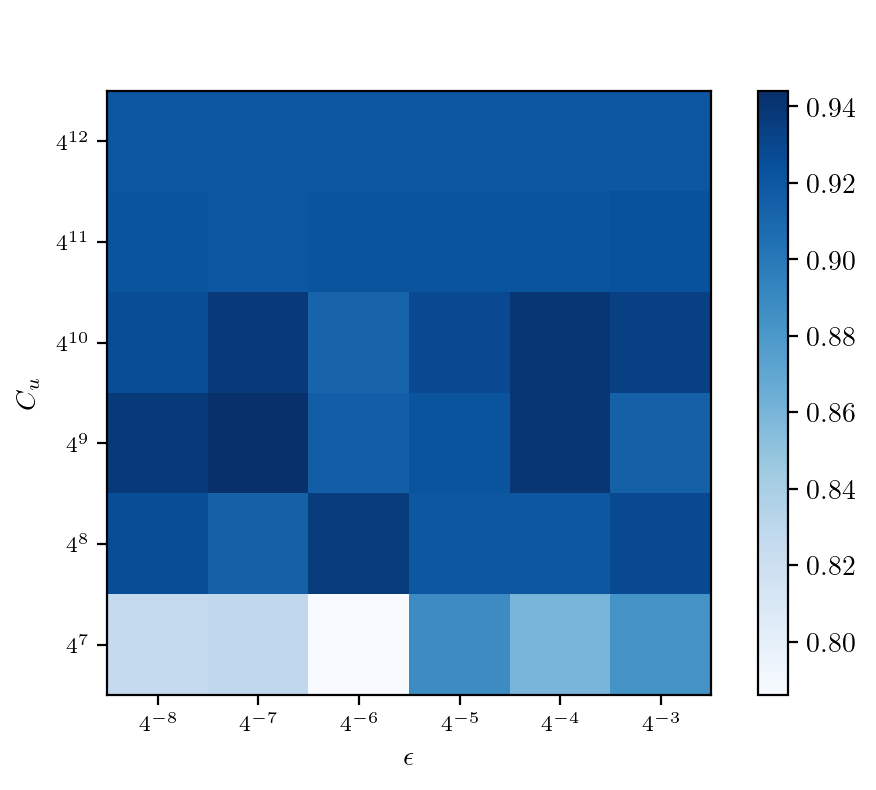}
        \caption{LS-L1-$\mathfrak U$-SQSSVM (Arti-Q2)}
        \label{fig: LS-L1-U-SQS Arti-Q2}
    \end{subfigure}
    
    \begin{subfigure}[H]{0.3\textwidth}
        \includegraphics[width=\textwidth]{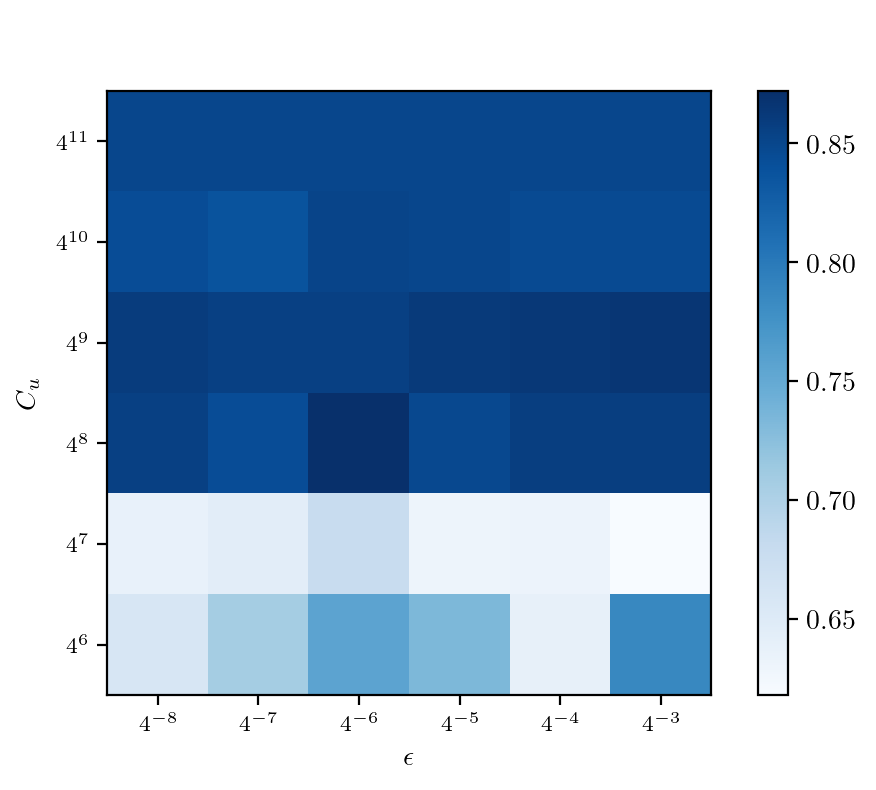}
        \caption{$\mathfrak U$-SQSSVM (ND2by100)}
        \label{fig: U-SQS ND2by100}
    \end{subfigure}
	~
    \begin{subfigure}[H]{0.3\textwidth}
        \includegraphics[width=\textwidth]{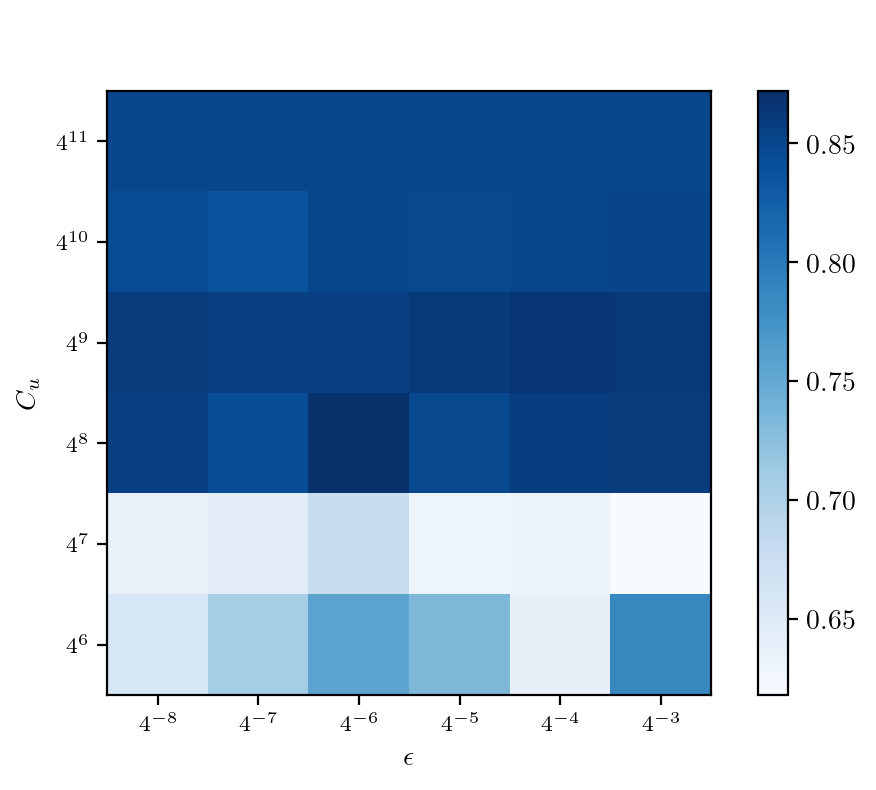}
        \caption{L1-$\mathfrak U$-SQSSVM (ND2by100)}
        \label{fig: L1-U-SQS ND2by100}
    \end{subfigure}
    ~
    \begin{subfigure}[H]{0.3\textwidth}
        \includegraphics[width=\textwidth]{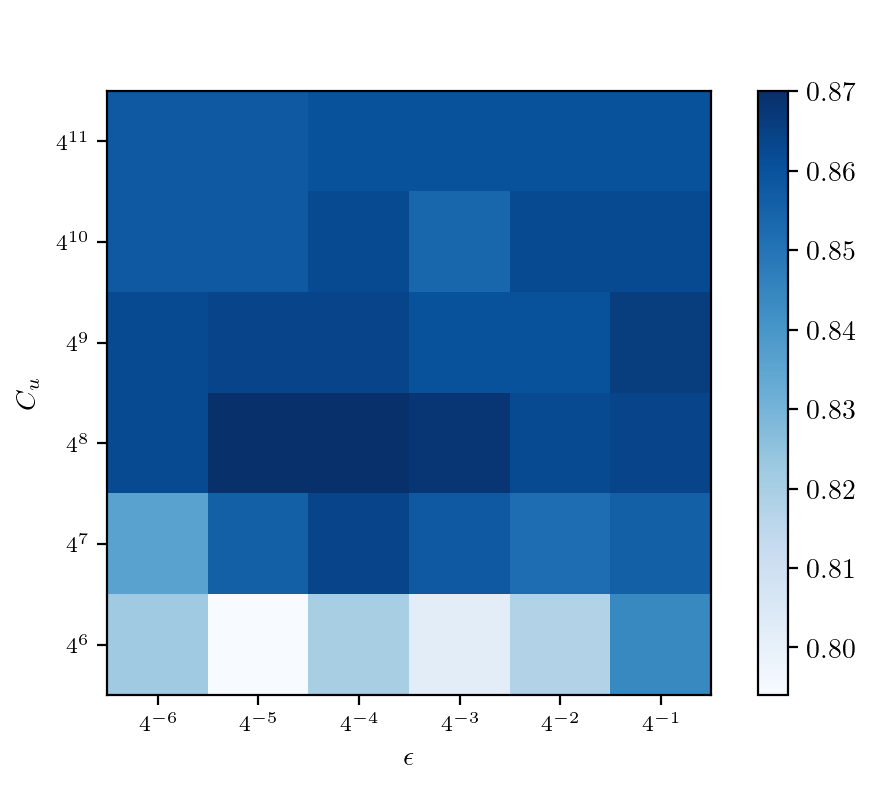}
        \caption{LS-L1-$\mathfrak U$-SQSSVM (ND2by100)}
        \label{fig: LS-L1-U-SQS ND2by100}
    \end{subfigure}    
    
    \caption{Colormaps (accuracy vs. $C_u$ and $\epsilon$).}
    \label{fig: Colormaps acc vs paras}
\end{figure}

\begin{table}[H]
\centering
\begin{tabular}{c|ccccc}\hline
	&		SQSSVM		&		L1-SQSSVM		&		$\mathfrak U$-SQSSVM		&		L1-$\mathfrak U$-SQSSVM		&		LS-L1-$\mathfrak U$-SQSSVM \\\hline
Arti-Q2 & 85.00 & 85.00 & 87.20 & 87.20 & 87.40 \\ 
ND2by200 & 93.00 & 93.00 & 97.00 & 97.00 & 95.00 \\ 
\hline
\end{tabular}
\caption{Highest accuracy scores (\%) on the colormaps in Figure \ref{fig: Colormaps acc vs paras}.}
\label{table: Highest acc on colormaps}
\end{table}

The accuracy scores produced by SQSSSVM, L1-SQSSVM, and all proposed models on these data sets are listed in Table \ref{table: Highest acc on colormaps}. From Figure \ref{fig: Colormaps acc vs paras} and Table \ref{table: Highest acc on colormaps}, it is not hard to see that the accuracy of the proposed models can be improved by adjusting the parameters $C_u$ and $\epsilon$. In addition, notice that the optimal $C_u$ is a relatively big number while the optimal $\epsilon$ is a relatively small number. This helps us reduce the range of grids when using the grid-search method for tuning parameters in practice. It may save much effort when training the models.

Next, we investigate how the amount of the Universum data add to the training set influences the classification accuracy of the proposed models. By fixing all the parameters, the accuracy scores of the proposed models are  plotted against the rate of the Universum data. Notice that the Universum data is randomly created in the data set. Hence, in order to make it statistically meaningful, each curve plotted in Figure \ref{fig: acc vs urate} is the average of ten repeated experiments. 

\begin{figure}[H]
    \centering
    \begin{subfigure}[H]{0.48\textwidth}
        \includegraphics[width=\textwidth]{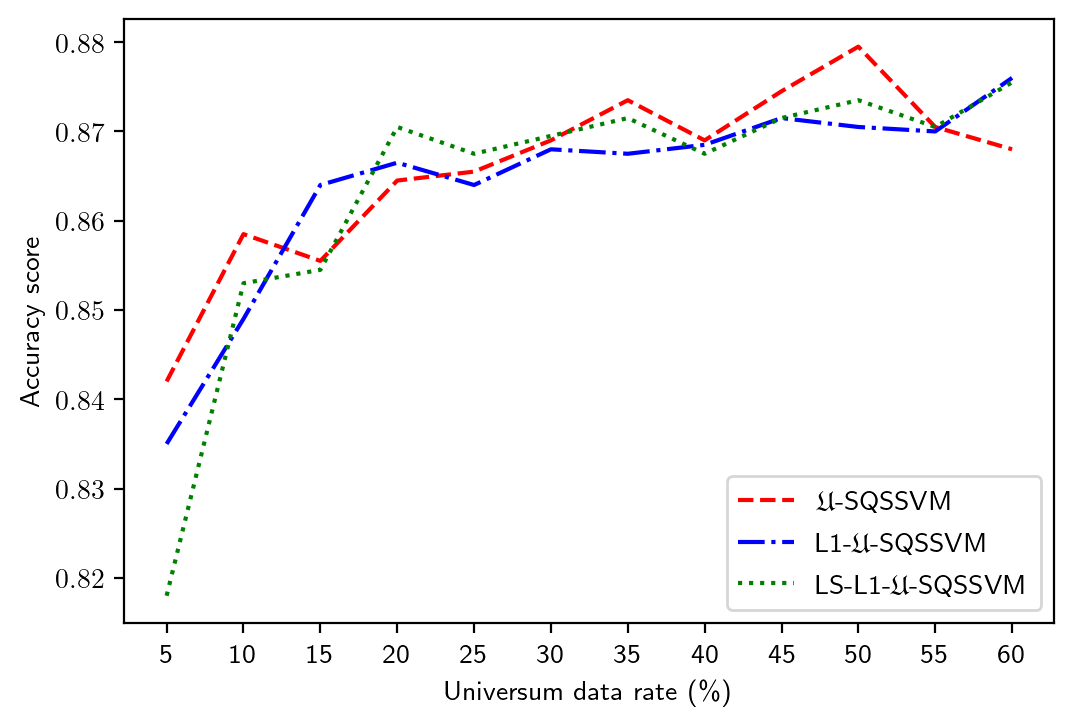}
        \caption{ND200}
        \label{fig: ND200 acc vs urate}
    \end{subfigure}
	~
    \begin{subfigure}[H]{0.48\textwidth}
        \includegraphics[width=\textwidth]{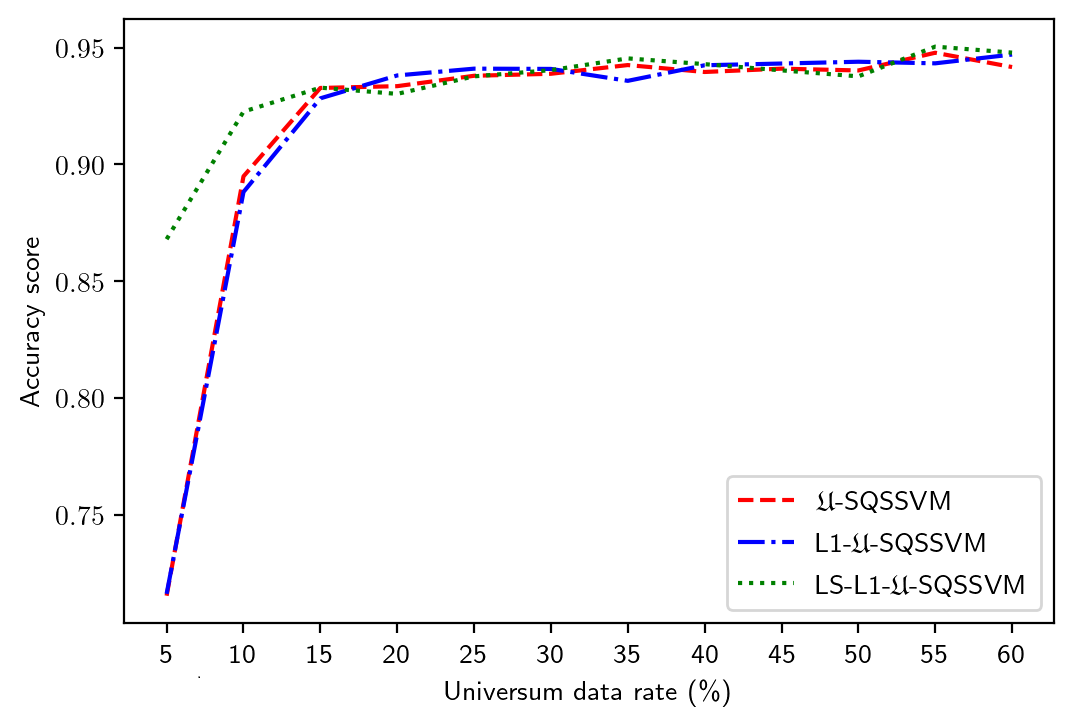}
        \caption{Seeds}
        \label{fig: seeds acc vs urate}
    \end{subfigure}
	~
    \begin{subfigure}[H]{0.50\textwidth}
        \includegraphics[width=\textwidth]{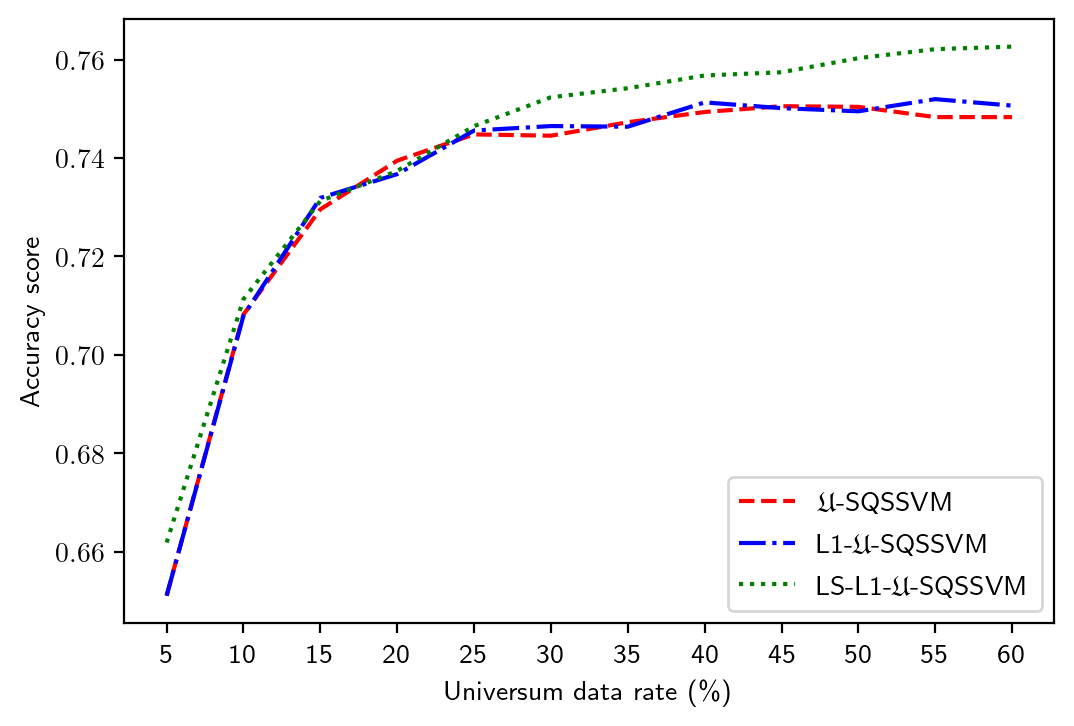}
        \caption{Pima}
        \label{fig: pima acc vs urate}
    \end{subfigure}
    
    \caption{Accuracy scores vs. Universum data rates}
    \label{fig: acc vs urate}
\end{figure}

From the Figure \ref{fig: acc vs urate}, we observe that the proposed models become more accurate as the amount of the Universum data increases. For each model, the accuracy score will approach the highest accuracy score the model can achieve with the fixed parameters. In other words, when there are enough Universum data, increasing its amount will not significantly affect the classification accuracy.

In addition, we also train the proposed models on a linearly separable data set. The classification surfaces are plotted in the following Figure \ref{fig: linearly separable data set}. The training data is plotted as the hollow points and the testing data is plotted as the solid points. The green points are the Universum data generated with the training data.

\begin{figure}[H]
    \centering
        \includegraphics[width=0.7\textwidth]{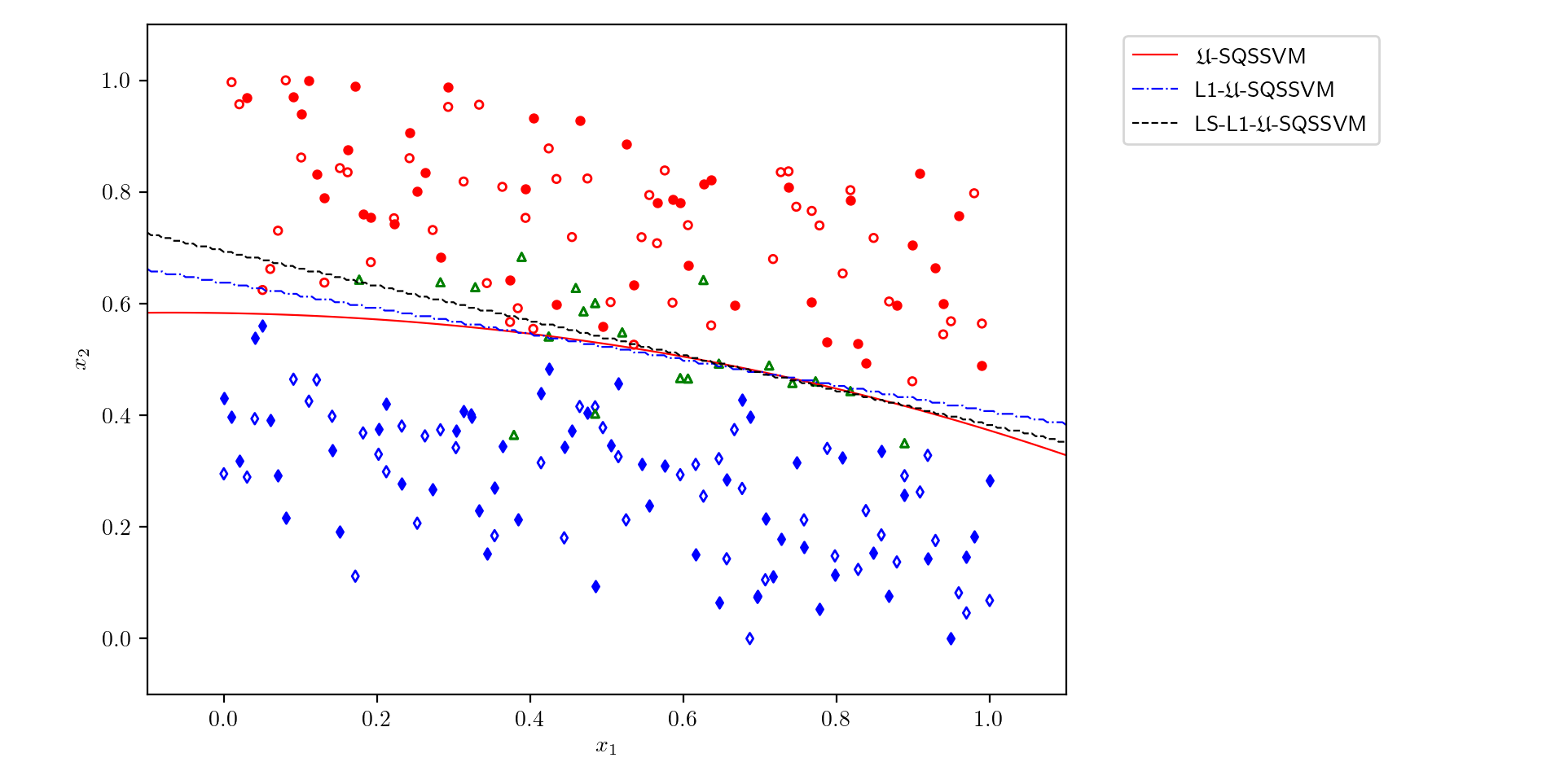}
    \caption{Linearly separable data set ($\lambda = 4096$)}.
    \label{fig: linearly separable data set}
\end{figure}

By fixing $\mu, C_u$ and $\epsilon$ for all three proposed models and letting $\lambda = 4096$ as a big pentalty parameter on the L1 terms in L1-$\mathfrak U$-SQSSVM and LS-L1-$\mathfrak U$-SQSSVM. Although all the three proposed models are capable of classifying this linearly separable data set completely, the $\mathfrak U$-SQSSVM model is not able to generate a linear hyperplane as the other two models do. It indicates that the L1 terms may help adjust the flexibility of the separation surface when applied to real-world applications. Notice that the accuracy highly depends on the parameters, so selecting the parameters is very important for the performance of the proposed models. The grid-search method is a commonly used approach to tune parameters for machine learning models \cite{gao2021kernel,luo2016soft} and we adopted this method to choose the best parameters. The parameters were selected from, for example, $\log_2 \mu \in \{-4, \dots, 20\}$, $\log_2 \lambda \in \{-8, \dots, 20\}$, $\log_2 C_u \in \{-4, \dots, 10\}$, and $\log_2 \epsilon \in \{-8, \dots, 0\}$. In addition, parameter $\gamma$ is selected from $\log_2 \gamma \in \{-4, \dots, 4\}$ for the RBF kernel $k(x_i,x_j)=\exp{\left({-\|{x_i-x_j}\|^2}/{\gamma^2}\right)}$.

\subsection{Experiments on Artificial Data Sets}
\label{subsection: Experiments on artificial data sets}

In this subsection, we conduct some experiments on artificially designed data sets to see how the proposed models perform on data sets with different patterns. Before that, we pre-test the proposed models with two easy artificial data sets. As plotted in Figure \ref{fig: plot solutions with different data patterns.}, one data set is quadratically separable and the other one has normal in-class distributions. The hollow points in the plots are the training data points and the green points are the generated Universum data points. The solid points are for testing. The separation hyperplanes or surfaces produced by the proposed models and the benchmark models are also plotted.

\begin{figure}[htbp]
    \centering
    \begin{subfigure}[b]{0.7\textwidth}
        \includegraphics[width=\textwidth]{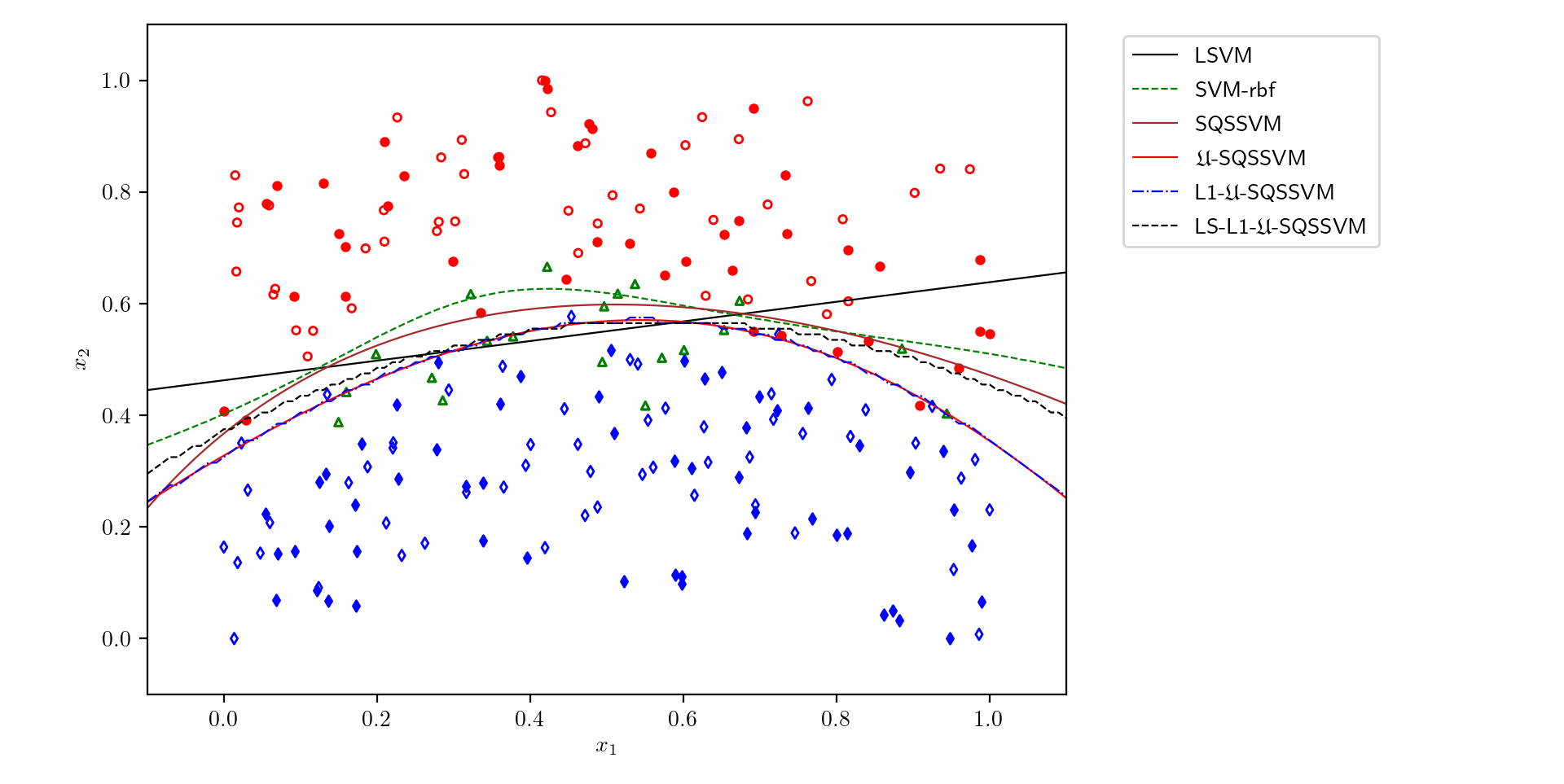}
        \caption{Quadratically separable data set.}
        \label{fig: Q2N200trte_solu}
    \end{subfigure}
	
	\begin{subfigure}[b]{0.7\textwidth}
        \includegraphics[width=\textwidth]{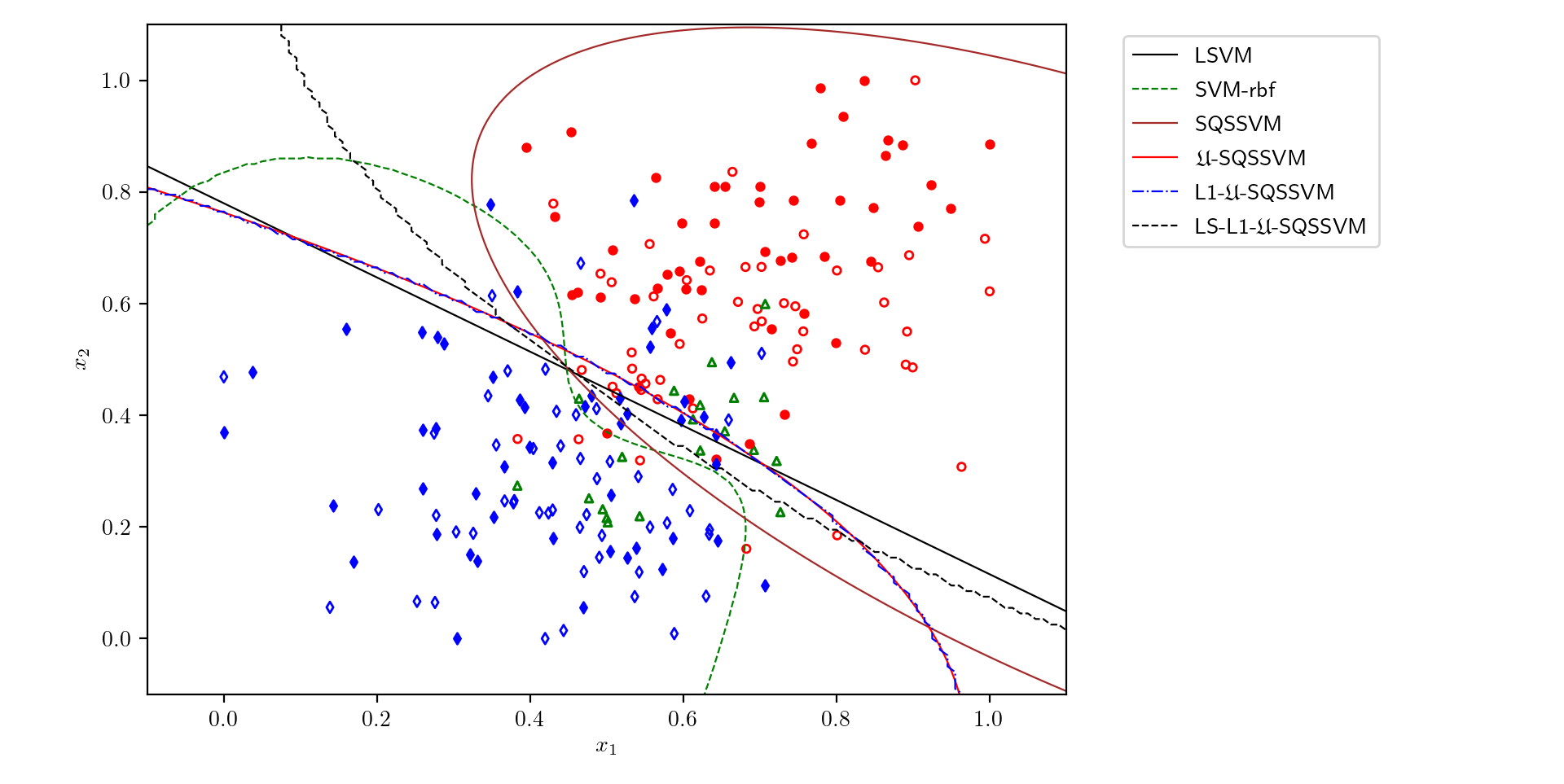}
        \caption{Normally distributed data set.}
        \label{fig: ND200trte_solu}
    \end{subfigure}
    
    \caption{Plot separation surfaces on data sets with different patterns.}
    \label{fig: plot solutions with different data patterns.}
\end{figure}

From the results in Figure \ref{fig: plot solutions with different data patterns.}, we can see the classification results generated by the proposed models are as good as, or even better than those of other well-studied models. 

Next, more experiments are conducted on different artificial data sets with a five-fold cross-validation procedure. 
Some basic information of the artificial data sets is listed in Table \ref{table: artificial data sets info}.

\begin{table}[H]
\centering
\begin{tabular}{ccc}
\hline Data set & \# of data points (Class 1/Class 2) & \# of features \\
\hline
Arti-Q1	&	100/100	&	2	\\
Arti-Q2	&	100/100	&	2	\\
Arti-Q3	&	400/400	&	2	\\ 
npc550	&	191/359	&	32	\\
\hline
\end{tabular}
\caption{Basic information of artificial data sets.}
\label{table: artificial data sets info}
\end{table}

The Arti-Q1 data set is a quadratically separable data set. The Arti-Q2 and Arti-Q3 data sets are data sets with quadratic patterns but non-separable. The npc500 data set has in-class normal distributions with a relatively larger number of features. For each data set, the mean and the standard deviation of the accuracy scores of each tested model are recorded. 

    

\begin{table}[H]
\centering
\begin{tabular}{c|cccc}
\hline
\multirow{2}{*}{Models} & \multicolumn{4}{c}{Accuracy score / standard deviation (\%)} \\
\cline{2-5} &	Arti-Q1			&	Arti-Q2			&	Arti-Q3			&	npc550			\\
\hline
SQSSVM	&	\textbf{100.00}	/	0.00	&	77.75	/	5.46	&	78.75	/	4.38	&	95.73	/	1.77	\\
L1-SQSSVM	&	\textbf{100.00}	/	0.00	&	77.75	/	5.46	&	78.75	/	4.38	&	95.82	/	1.78	\\
LSVM	&	64.25	/	8.58	&	57.50	/	8.66	&	60.94	/	2.67	&	87.27	/	4.18	\\
SVM-rbf	&	99.25	/	1.21	&	61.00	/	8.60	&	80.44	/	3.74	&	91.09	/	4.45	\\
$\mathfrak U$-SVM	&	58.25	/	10.93	&	63.25	/	8.58	&	63.50	/	3.30	&	73.36	/	20.79	\\
$\mathfrak U$-SVM-rbf	&	64.75	/	10.10	&	64.25	/	8.25	&	63.81	/	4.04	&	67.55	/	22.39	\\
$\mathfrak U$-SQSSVM	&	\textbf{100.00}	/	0.00	&	\textbf{81.00}	/	7.92	&	81.00	/	4.12	&	\textbf{95.91}	/	2.47	\\
L1-$\mathfrak U$-SQSSVM	&	\textbf{100.00}	/	0.00	&	\textbf{81.00}	/	7.92	&	81.00	/	4.12	&	95.82	/	2.61	\\
LS-L1-$\mathfrak U$-SQSSVM	&	\textbf{100.00}	/	0.00	&	79.50	/	8.40	&	\textbf{81.44}	/	3.06	&	95.64	/	1.86	\\
\hline
\end{tabular}
\caption{Artificial data results.}
\label{table: Artificial data results results}
\end{table}

From the results in Table \ref{table: Artificial data results results}, we observe that the proposed models perform as well as other models on the Arti-Q1 data set. Indeed, the proposed $\mathfrak U$-SQSSVM, L1-$\mathfrak U$-SQSSVM, and LS-L1-$\mathfrak U$-SQSSVM models all produce quadratic separation surfaces for binary classification, and they are expected to classify the quadratically separable Arti-Q1 data set perfectly. For the other three data sets, the classification accuracy is improved by the proposed models. It motivates us to conduct more computational experiments to see the performance of the proposed models compared with others.

\subsection{Experiments on Public Benchmark Data Sets}
\label{subsection: experiments on public benchmark data sets}

In this subsection, we investigate the performance of the proposed models on some public benchmark data sets. In Table \ref{table: public benchmark data sets info}, some basic information of the utilized benchmark data sets is listed.

    \begin{savenotes}
    \begin{table}[H]
	\centering
	\begin{tabular}{ccc}
	\hline Data set & \# of data points (Class 1/Class 2) & \# of features \\
	\hline
	AUScredit	&	307/383	&	14	\\
	CTG	&	471/1655	&	22	\\
	DEUcredit	&	300/700	&	20	\\
	liver	&	134/185	&	6	\\
	MAGIC	&	6688/12332	&	10	\\
	seeds	&	66/68	&	7	\\
	svmguide1	&	1089/2000	&	4	\\
	svmguide3	&	296/947	&	21	\\
	wine	&	59/71	&	13	\\
	JAPcredit	&	296/375	&	15	\\
	heart	&	120/150	&	13	\\
	ecoli	&	143/193	&	7	\\
	wholesale	&	142/298	&	7	\\
	blood	&	178/570	&	4	\\
	sonar	&	96/110	&	60	\\
	fruit	&	5000/5000	&	5	\\
    \hline
	\end{tabular}
    
	\caption{Basic information of public benchmark data sets.\protect\footnote{The sources of data sets can be found here: \url{https://github.com/tonygaobasketball/Sparse-UQSSVM-Models-for-Binary-Classification}}}
	\label{table: public benchmark data sets info}
	\end{table}
	\end{savenotes}

For each data set, the five-fold cross-validation is adopted in the training process for tuning parameters. The parameters are tuned by using the grid-search method. The mean and the standard deviation of the accuracy scores of each tested model are recorded. We also record the training CPU time of each tested model to compare the computational efficiency.


\begin{sidewaystable} 
\centering
\resizebox{\textwidth}{!}{
\begin{tabular}{c|ccccccccc}\hline
& \multicolumn{9}{c}{Accuracy score / standard deviation (\%)} \\
\cline{2-10}
	&		SQSSVM		&		L1-SQSSVM		&		LSVM		&		SVM-rbf		&		$\mathfrak U$-SVM		&		$\mathfrak U$-SVM-rbf		&		$\mathfrak U$-SQSSVM		&		L1-$\mathfrak U$-SQSSVM		&		LS-L1-$\mathfrak U$-SQSSVM		\\
\hline
	AUScredit	&	85.43	/	2.68	&	85.43	/	2.68	&	85.29	/	2.98	&	85.14	/	2.76	&	83.41	/	6.84	&	85.07	/	3.50	&	85.43	/	2.68	&	85.43	/	2.68	&	\textbf{86.52}	/	2.82	\\
	CTG	&	98.12	/	0.92	&	98.12	/	0.92	&	96.64	/	0.90	&	98.26	/	0.55	&	45.70	/	4.62	&	91.65	/	1.50	&	98.16	/	0.26	&	98.16	/	0.26	&	\textbf{98.35}	/	0.39	\\
	DEUcredit	&	74.90	/	2.92	&	75.20	/	3.05	&	75.95	/	2.50	&	75.45	/	2.73	&	41.15	/	7.05	&	73.20	/	2.72	&	75.30	/	2.06	&	75.50	/	2.66	&	\textbf{76.05}	/	2.57	\\
	liver	&	92.85	/	6.34	&	95.13	/	5.01	&	95.51	/	4.57	&	94.79	/	4.68	&	92.51	/	4.70	&	91.41	/	7.11	&	96.30	/	3.49	&	96.30	/	3.49	&	\textbf{97.78}	/	2.59	\\
	MAGIC	&	\textbf{86.15}	/	0.41	&	\textbf{86.15}	/	0.41	&	79.16	/	0.52	&	85.50	/	0.44	&	57.25	/	11.32	&	65.67	/	9.63	&	\textbf{86.15}	/	0.41	&	\textbf{86.15}	/	0.41	&	85.26	/	0.45	\\
	seeds	&	92.85	/	6.34	&	95.13	/	5.01	&	95.51	/	4.57	&	94.79	/	4.68	&	92.51	/	4.70	&	91.41	/	7.11	&	96.30	/	3.49	&	96.30	/	3.49	&	\textbf{97.78}	/	2.59	\\
	svmguide1	&	\textbf{96.58}	/	1.02	&	\textbf{96.58}	/	1.02	&	93.33	/	0.70	&	95.95	/	0.86	&	79.94	/	10.08	&	87.59	/	18.43	&	\textbf{96.58}	/	1.02	&	\textbf{96.58}	/	1.02	&	94.71	/	1.27	\\
	svmguide3	&	83.11	/	2.90	&	82.94	/	2.44	&	81.90	/	2.25	&	82.38	/	2.99	&	80.73	/	1.55	&	77.11	/	1.74	&	\textbf{84.47}	/	2.35	&	\textbf{84.47}	/	2.35	&	82.94	/	2.46	\\
	wine	&	97.69	/	3.24	&	96.92	/	3.53	&	97.69	/	3.24	&	96.92	/	3.97	&	95.38	/	8.27	&	63.85	/	20.21	&	\textbf{99.62}	/	1.22	&	\textbf{99.62}	/	1.22	&	99.23	/	1.62	\\
	pima	&	76.76	/	2.17	&	\textbf{77.86}	/	2.45	&	\textbf{77.86}	/	2.45	&	76.95	/	2.72	&	73.05	/	3.23	&	70.37	/	6.48	&	76.76	/	2.17	&	\textbf{77.86}	/	2.45	&	77.08	/	2.48	\\
	JAPcredit	&	86.59	/	2.57	&	86.36	/	2.69	&	86.36	/	2.67	&	85.37	/	3.00	&	84.13	/	6.06	&	86.51	/	2.81	&	86.59	/	2.57	&	86.36	/	2.69	&	\textbf{86.82}	/	2.66	\\
	heart	&	81.30	/	4.14	&	82.59	/	3.40	&	83.33	/	4.94	&	82.22	/	4.29	&	83.33	/	4.70	&	68.33	/	11.33	&	81.48	/	4.36	&	82.59	/	3.40	&	\textbf{83.52}	/	3.95	\\
	ecoli	&	96.43	/	2.74	&	96.43	/	2.74	&	96.13	/	1.89	&	96.28	/	1.90	&	94.94	/	2.74	&	94.94	/	2.74	&	96.43	/	2.74	&	96.43	/	2.74	&	\textbf{97.02}	/	1.99	\\
	wholesale	&	90.80	/	2.42	&	91.36	/	2.41	&	90.91	/	2.68	&	90.34	/	3.83	&	89.32	/	2.99	&	89.32	/	2.99	&	91.02	/	1.96	&	\textbf{91.48}	/	2.35	&	90.00	/	2.38	\\
	blood	&	76.47	/	2.27	&	76.40	/	2.21	&	76.20	/	2.41	&	76.80	/	2.09	&	30.00	/	13.31	&	72.47	/	8.55	&	76.67	/	4.11	&	77.94	/	3.25	&	\textbf{78.00}	/	3.49	\\
	sonar	&	87.93	/	3.60	&	88.17	/	4.02	&	76.35	/	6.23	&	85.55	/	5.48	&	61.86	/	9.67	&	55.09	/	6.99	&	86.49	/	4.09	&	\textbf{88.93}	/	4.59	&	\textbf{88.93}	/	5.27	\\
	fruit	&	92.78	/	0.65	&	92.72	/	0.66	&	93.56	/	0.49	&	93.01	/	0.64	&	50.01	/	1.00	&	50.01	/	1.00	&	93.05	/	0.70	&	92.91	/	0.68	&	\textbf{93.87}	/	0.44	\\
\hline
\end{tabular}
}
\caption{Public benchmark data accuracy results.}
\label{table: benchmark data accuracy results}
\end{sidewaystable} 


\begin{sidewaystable} 
\centering
\begin{tabular}{c|ccccccccccc}
\hline	& \multicolumn{9}{c}{CPU time (s)} \\
\cline{2-10} &		SQSSVM		&		L1-SQSSVM		&		LSVM		&		SVM-rbf		&		$\mathfrak U$-SVM		&		$\mathfrak U$-SVM-rbf		&		$\mathfrak U$-SQSSVM		&		L1-$\mathfrak U$-SQSSVM		&		LS-L1-$\mathfrak U$-SQSSVM		\\
\hline
	Arti-Q1	&	0.031	&	0.033	&	0.022	&	0.008	&	0.180	&	3.448	&	0.045	&	0.045	&	$\bm{<0.001}$	\\
	Arti-Q2	&	0.025	&	0.028	&	0.002	&	0.001	&	0.047	&	0.062	&	0.021	&	0.029	&	$\bm{<0.001}$	\\
	Q2by800	&	0.031	&	0.033	&	0.022	&	0.008	&	0.180	&	3.448	&	0.045	&	0.045	&	$\bm{<0.001}$	\\
	npc550	&	1.696	&	1.541	&	0.115	&	\textbf{0.004}	&	0.128	&	0.996	&	1.991	&	1.822	&	0.192	\\
	AUScredit	&	0.160	&	0.168	&	0.174	&	0.011	&	0.145	&	1.464	&	0.227	&	0.233	&	\textbf{0.005}	\\
	CTG	&	1.151	&	1.168	&	0.213	&	\textbf{0.014}	&	2.575	&	148.545	&	1.617	&	1.607	&	0.084	\\
	DEUcredit	&	0.448	&	0.462	&	0.763	&	0.025	&	0.340	&	3.429	&	0.604	&	0.606	&	\textbf{0.010}	\\
	liver	&	0.039	&	0.044	&	0.007	&	0.014	&	0.052	&	1.595	&	0.049	&	0.051	&	\textbf{0.002}	\\
	MAGIC	&	2.907	&	3.105	&	10.411	&	6.446	&	1013.818	&	1407.144	&	9.418	&	9.274	&	\textbf{0.014}	\\
	seeds	&	0.034	&	0.038	&	0.001	&	$\bm{<0.001}$	&	0.039	&	0.068	&	0.043	&	0.040	&	0.013	\\
	svmguide1	&	0.132	&	0.137	&	0.066	&	0.042	&	5.968	&	24.764	&	0.270	&	0.274	&	$\bm{<0.001}$	\\
	svmguide3	&	0.756	&	0.778	&	0.485	&	0.120	&	0.749	&	23.508	&	1.004	&	1.062	&	\textbf{0.081}	\\
	wine	&	0.077	&	0.077	&	$\bm{<0.001}$	&	0.001	&	0.037	&	0.071	&	0.085	&	0.089	&	0.043	\\
	pima	&	1.830	&	1.657	&	\textbf{0.135}	&	0.006	&	0.173	&	1.083	&	2.209	&	1.945	&	0.271	\\
	JAPcredit	&	1.745	&	1.577	&	0.122	&	\textbf{0.006}	&	0.130	&	1.029	&	2.053	&	1.851	&	0.217	\\
	heart	&	0.077	&	0.087	&	\textbf{0.012}	&	0.003	&	0.049	&	0.225	&	0.099	&	0.165	&	0.026	\\
	ecoli	&	0.035	&	0.043	&	$\bm{<0.001}$	&	0.008	&	0.050	&	0.062	&	0.046	&	0.044	&	0.005	\\
	wholesale	&	0.059	&	0.054	&	\textbf{0.003}	&	\textbf{0.003}	&	0.069	&	0.082	&	0.066	&	0.062	&	0.016	\\
	blood	&	0.045	&	0.045	&	0.012	&	0.033	&	0.131	&	3.123	&	0.056	&	0.060	&	$\bm{<0.001}$	\\
	sonar	&	20.457	&	12.824	&	0.017	&	$\bm{<0.001}$	&	0.068	&	0.153	&	19.902	&	12.949	&	44.527	\\
	fruit	&	0.497	&	0.506	&	0.316	&	0.427	&	168.744	&	286.431	&	1.928	&	1.945	&	\textbf{0.010}	\\
\hline
\end{tabular}
\caption{Training CPU time.}
\label{table: CPU time results}
\end{sidewaystable} 

To summarize the results in Tables \ref{table: benchmark data accuracy results} and \ref{table: CPU time results}, we plot the histograms in Figure \ref{fig: histograms public benchmark acc and CPUt} to compare the average of mean accuracy scores and the median of CPU time of all tested models on all the benchmark data sets.

\begin{figure}[htbp]
    \centering
    \begin{subfigure}[b]{\textwidth}
        \includegraphics[width=\textwidth]{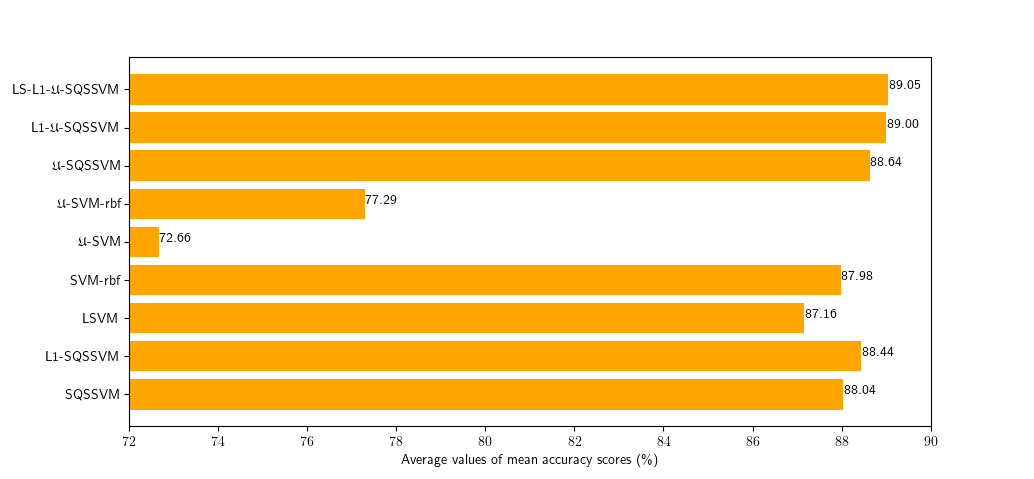}
        \caption{Average of mean accuracy scores on public benchmark data sets.}
        \label{fig: Barh-MeanAcc}
    \end{subfigure}
	
    \begin{subfigure}[b]{\textwidth}
        \includegraphics[width=\textwidth]{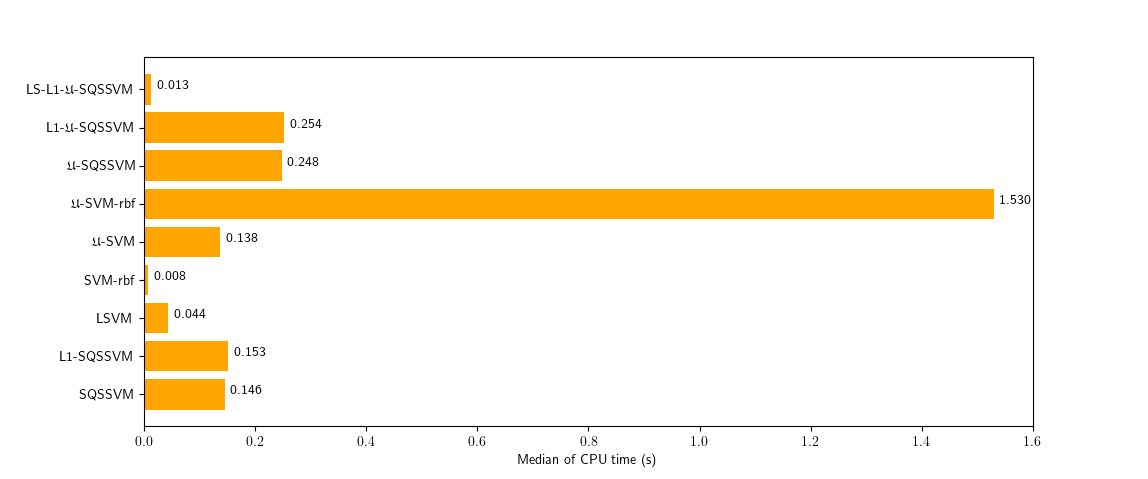}
        \caption{Median of CPU time on public benchmark data sets.}
        \label{fig: Barh-MedianCPUt}
    \end{subfigure}
    \caption{}
    \label{fig: histograms public benchmark acc and CPUt}
\end{figure}

From the results in Table \ref{table: benchmark data accuracy results}-\ref{table: CPU time results}, we have the following observations:
\begin{itemize}
    \item The proposed $\mathfrak U$-SQSSVM, L1-$\mathfrak U$-SQSSVM and LS-L1-$\mathfrak U$-SQSSVM models produce the highest or the second highest mean accuracy scores on the tested data sets. Notice that there are both small and large scale data sets listed in Table \ref{table: public benchmark data sets info}, which approves the effectiveness of the proposed models on classifying data sets in different scales.
    \item The CPU time of the proposed $\mathfrak U$-SQSSVM and L1-$\mathfrak U$-SQSSVM depends on the number of data points in the data sets. Even though for large data sets, like the MAGIC or the fruit data sets, the CPU time is high, it is still acceptable. Moreover, the CPU time of the proposed LS-L1-$\mathfrak U$-SQSSVM model is short on most of the data sets. Although the algorithm might not be efficient enough when the number of features in the data set is too large, the general efficiency is acceptable. In all, the CPU time listed in Table \ref{table: CPU time results} indicates the efficient performance of the proposed models on real-world applications.
    \item For each data set, the Universum models, i.e., $\mathfrak U$-SQSSVM and L1-$\mathfrak U$-SQSSVM are at least as accurate as the corresponding SQSSVM model or the L1-SQSSVM model. In other words, they take advantage of both kernel-free QSSVM models and the Universum data. Compared with $\mathfrak U$-SVM and $\mathfrak U$-SVM-rbf models, the proposed Universum models produce smaller standard deviations, which indicates better stability. 
    
    \item From Figure \ref{fig: Barh-MeanAcc}, we notice that all the proposed models have higher average values of mean accuracy scores than those of other models. In addition, the LS-L1-$\mathfrak U$-SQSSVM model implemented with the proposed algorithm is as efficient as the SVM-rbf implemented with the well-developed Sklearn package. In all, it indicates that our proposed models, especially the LS-L1-$\mathfrak U$-SQSSVM model, have a better general performance compared with other tested models.
    
\end{itemize}

\section{Conclusions}
\label{sec: Conclusions}

In this paper, we have proposed three kernel-free Universum quadratic surface support vector machine models for binary classification. Certain theoretical properties have been rigorously investigated and an efficient algorithm has been proposed to implement the proposed LS-L1-$\mathfrak U$-SQSSVM model. Computational experiments have been conducted to verify the effectiveness and the computational efficiency of the proposed models. Some major findings are summarized as follows.

\begin{itemize}
    \item By incorporating Universum data into the kernel-free QSSVM models, we  proposed $\mathfrak U$-QSSVM models. To adjust the flexibility of the separation surface and to obtain a possible sparsity pattern, we have utilized an $\ell_1$ norm regularization to obtain L1-$\mathfrak U-$QSSVM models. 
    
    \item In addition, the least squares version of the L1-$\mathfrak U$-SQSSVM model has been proposed, which is denoted as LS-L1-$\mathfrak U$-SQSSVM. Instead of using a standard  numerical optimization solver, we have designed an efficient algorithm for implementing the proposed LS-L1-$\mathfrak U$-SQSSVM. 
    
    \item Some theoretical properties, including the solution existence, $z$-uniqueness, vanishing margin property and the bounds for $c$, have been studied for $\mathfrak U$-QSSVM and L1-$\mathfrak U$-QSSVM models. Moreover, we have investigated the conditions on nonzero optimal solution of the LS-L1-$\mathfrak U$-QSSVM models.
    
    \item Numerical experiments have been conducted to show the influence of the parameters on classification accuracy. In addition, the promising numerical results on some artificial and some public benchmark data sets imply the effectiveness of the proposed models in solving real-world binary classification problems. Last but not the least, the short training CPU time verifies the high efficiency of the proposed algorithm for the LS-L1-$\mathfrak U$-SQSSVM model.
    
\end{itemize}

This research might be extended to some additional research works. An immediate future work is to investigate the robustness of the proposed models for binary classification \cite{wang2018robust}. Moreover, it would be interesting to investigate how the proposed models perform on imbalanced data, such as the credit scoring data. Generalizing these models for multi-class classification also needs a through future study.


\section*{Acknowledgments} 
The work of H. Moosaei was  supported by the Czech Science Foundation Grant P403-18-04735S and Center for Foundations of Modern Computer Science
(Charles Univ. project UNCE/SCI/004). 
 The work of M.  Hlad\'{i}k was  supported by the Czech Science Foundation Grant P403-18-04735S.

\section*{Conflict of interest}
 The authors declare that they have no conflicts of interest.

\bibliographystyle{plain}
\bibliography{main}

\medskip
\medskip

\end{document}